\documentclass{article}

\usepackage{microtype}
\usepackage{graphicx}
\usepackage{subcaption}
\usepackage{booktabs} 
\usepackage{enumitem}
\usepackage{hyperref}
\usepackage[table]{xcolor}
\usepackage{xcolor}
\usepackage{tcolorbox}
\usepackage{amsmath}
\tcbuselibrary{breakable,skins}

\definecolor{dsdrgreen}{RGB}{220,245,230}  

\usepackage{bbm}
\usepackage[preprint]{icml2026}

\usepackage{amsmath}
\usepackage{amssymb}
\usepackage{mathtools}
\usepackage{amsthm}

\usepackage[capitalize,noabbrev]{cleveref}

\theoremstyle{plain}
\newtheorem{theorem}{Theorem}[section]
\newtheorem{proposition}[theorem]{Proposition}
\newtheorem{lemma}[theorem]{Lemma}

\theoremstyle{definition}

\theoremstyle{remark}

\usepackage[textsize=tiny]{todonotes}

\icmltitlerunning{DSDR: Dual-Scale Diversity Regularization for Exploration in LLM Reasoning}

\begin{document}

\twocolumn[
  \icmltitle{\textit{DSDR}: Dual-Scale Diversity Regularization for Exploration in LLM Reasoning}

  \icmlsetsymbol{equal}{*}
  \icmlsetsymbol{leader}{$\dagger$}

  \begin{icmlauthorlist}
    \icmlauthor{Zhongwei Wan}{equal,osu}
    \icmlauthor{Yun Shen}{equal,osu}
    \icmlauthor{Zhihao Dou}{case}
    \icmlauthor{Donghao Zhou}{cuhk}
    \icmlauthor{Yu Zhang}{mq}
    \icmlauthor{Xin Wang}{osu}
    \icmlauthor{Hui Shen}{umich}
    \icmlauthor{Jing Xiong}{hku}
    \icmlauthor{Chaofan Tao}{hku}
    \icmlauthor{Zixuan Zhong}{ucl}
    \icmlauthor{Peizhou Huang}{umich}
    \icmlauthor{Mi Zhang}{osu}
  \end{icmlauthorlist}

  \icmlaffiliation{osu}{The Ohio State University, USA}
  \icmlaffiliation{case}{Case Western Reserve University, USA}
  \icmlaffiliation{cuhk}{The Chinese University of Hong Kong, Hong Kong}
  \icmlaffiliation{mq}{Macquarie University, Australia}
  \icmlaffiliation{umich}{University of Michigan, USA}
  \icmlaffiliation{hku}{The University of Hong Kong, Hong Kong}
  \icmlaffiliation{ucl}{University College London, UK}

  \icmlcorrespondingauthor{Zhongwei Wan}{wan.512@osu.edu}
  \icmlcorrespondingauthor{Mi Zhang}{mizhang.1@osu.edu}

  \icmlkeywords{Reinforcement Learning, Large Language Models, Exploration, Diversity Regularization}

  \vskip 0.3in
]

\printAffiliationsAndNotice{}

\begin{abstract}
Reinforcement learning with verifiers (RLVR) is a central paradigm for improving large language model (LLM) reasoning, yet existing methods often suffer from limited exploration. Policies tend to collapse onto a few reasoning patterns and prematurely stop deep exploration, while conventional entropy regularization introduces only local stochasticity and fails to induce meaningful path-level diversity, leading to weak and unstable learning signals in group-based policy optimization.
We propose \textbf{DSDR}, a \textit{\textbf{D}ual-\textbf{S}cale \textbf{D}iversity \textbf{R}egularization} reinforcement learning framework that decomposes diversity in LLM reasoning into global and coupling components. Globally, DSDR promotes diversity among correct reasoning trajectories to explore distinct solution modes. Locally, it applies a length-invariant, token-level entropy regularization restricted to correct trajectories, preventing entropy collapse within each mode while preserving correctness. The two scales are coupled through a global-to-local allocation mechanism that emphasizes local regularization for more distinctive correct trajectories.
%
We provide theoretical support showing that DSDR preserves optimal correctness under bounded regularization, sustains informative learning signals in group-based optimization, and yields a principled global-to-local coupling rule.
Experiments on multiple reasoning benchmarks demonstrate consistent improvements in accuracy and pass@k, highlighting the importance of dual-scale diversity for deep exploration in RLVR. Code is available at \href{https://github.com/SUSTechBruce/DSDR}{DSDR}.
\end{abstract}

\section{Introduction}

Reinforcement learning with verified reward (RLVR)~\citep{liu2024deepseek, shao2024deepseekmath} has recently emerged as a powerful paradigm for enhancing the reasoning capabilities of LLMs. Group-based policy optimization methods, such as GRPO~\citep{guo2025deepseek}, further improve training stability by exploiting relative comparisons among sampled solutions, making RLVR practical at scale.
By leveraging outcome-based supervision rather than token-level imitation, RLVR has enabled substantial improvements in math and code reasoning, logical inference, and multi-step problem solving, and has become a core component of recent advances in reasoning-oriented LLM training~\citep{comanici2025gemini, singh2025openai, yang2025qwen3}.

\begin{figure*}[t]
  \centering
  \begin{minipage}{0.58\textwidth}
    \centering
    \includegraphics[width=\linewidth]{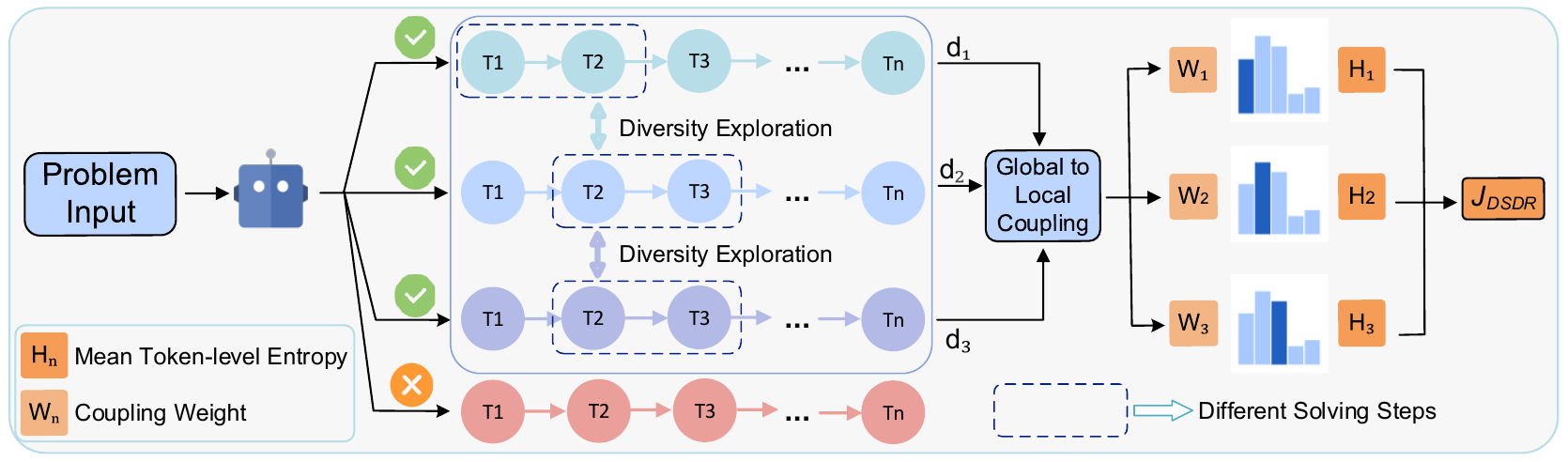}
  \end{minipage}
  \hfill
  \begin{minipage}{0.2\textwidth}
    \centering
    \includegraphics[width=\linewidth]{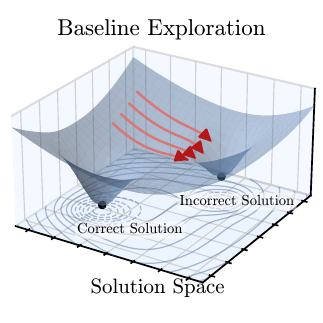}
  \end{minipage}
  \hfill
  \begin{minipage}{0.2\textwidth}
    \centering
    \includegraphics[width=\linewidth]{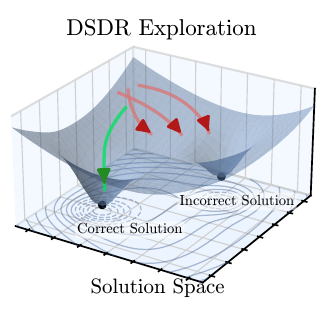}
  \end{minipage}

  \caption{
  (\textbf{Left}): global-to-local coupling for enhanced exploration during RL training.
  (\textbf{Right}): baseline exploration collapses to local suboptimal solutions, while DSDR promotes diverse trajectories that escape local optima and reach the correct solution space.
  }
  \vskip -0.6em
  \label{fig:motivation}
\end{figure*}






%
Despite these successes, verified reward-maximizing RL training often exhibits limited deep exploration, even when alternative valid solution paths exist~\citep{liu2025prorl, shen2025entropy, wu2025invisible, chen2025eepoexplorationenhancedpolicyoptimization}. This phenomenon is widely observed across RLVR pipelines: while models improve pass@1 accuracy, they often do so by concentrating probability mass on a small set of homogeneous reasoning patterns, leading to a collapse in solution diversity. Consequently, pass@k performance fails to improve and generalization deteriorates, especially when models are evaluated on out-of-domain or more compositional reasoning tasks~\cite{walder2025pass, jiang2025risk}.

A natural response is to encourage diversified exploration during training, yet existing methods remain insufficient. Entropy regularization~\cite{shen2025entropy, chen2025pass, agarwal2025unreasonable}, widely used in RL and RLVR, injects token-level stochasticity but mainly induces local randomness and fails to promote distinct reasoning paths.
Conversely, recent diversity-driven methods~\cite{zhang2025right, chen2025post, li2025jointlyreinforcingdiversityquality, hu2025diversity} encourage variation among generated solutions, sometimes with quality or correctness constraints. However, these approaches are typically single-scale or weakly coupled across scales: token-level entropy or uncertainty control induces local stochasticity and rarely sustains distinct reasoning trajectories, while trajectory-level diversity alone fails to prevent intra-mode entropy collapse once a few correct templates dominate. Consequently, policies may still prematurely concentrate on a small set of correct reasoning modes, and in group-normalized RLVR this concentration weakens within-group preference signals as verifier rewards become nearly constant. The core tension between deep exploration and correctness therefore remains unresolved, requiring exploration to be correctness-aligned and explicitly coordinated across both trajectory- and token-scales.


%
This motivates a dual-scale formulation of exploration for LLM reasoning. (i) At the global level, exploration requires discovering and maintaining multiple distinct reasoning modes, corresponding to different solution paths. (ii) At the local level, exploration requires preventing premature entropy collapse~\cite{cui2025entropy, shen2025entropy} within each mode, so that correct trajectories remain robust and expressive rather than brittle or over-confident. Crucially, these two forms of diversity are complementary and can be jointly optimized rather than treated in isolation, since not all correct modes are equally valuable for further exploration, motivating a mechanism that allocates local regularization based on global distinctiveness.

Building on this insight, we propose \textbf{DSDR}, a \textit{\textbf{D}ual-\textbf{S}cale \textbf{D}iversity \textbf{R}egularization} framework for RL-based LLM reasoning. DSDR integrates global diversity regularization over correct reasoning trajectories with a length-invariant, token-level entropy term applied exclusively to correct solutions. As illustrated in Figure~\ref{fig:motivation}, these two scales are coupled through a global-to-local allocation mechanism, which prioritizes local regularization for more distinctive correct trajectories. Additionally, this coupling prevents exploration from collapsing into narrow, locally suboptimal reasoning templates, instead promoting diverse trajectories that escape local optima and expand the correct solution space.
We further provide theoretical support for DSDR, showing that bounded positive-only local entropy preserves optimal correctness, while correct-only global shaping prevents signal degeneracy in group-normalized optimization. We also justify the global-to-local softmax coupling from a principled objective view, explaining why dual-scale diversity strengthens learning signals.
Extensive experiments across multiple reasoning benchmarks demonstrate that DSDR consistently improves accuracy, pass@k performance, and training stability, highlighting the importance of principled dual-scale diversity for deep exploration in RLVR. 
Our main contributions are summarized as follows:
\begin{itemize}[leftmargin=*]
    \item We introduce a dual-scale perspective on exploration in LLM reasoning, explicitly distinguishing global (inter-mode) and local (intra-mode) diversity and clarifying their complementary roles in RLVR.
    \item We propose DSDR, a correctness-aligned dual-scale diversity regularization framework that couples global diversity with positive-only, length-invariant local entropy through a global-to-local allocation mechanism.
    \item We provide theoretical support for correctness preservation and signal preservation in group-normalized RLVR, together with a principled interpretation of the global-to-local coupling, validated by consistent empirical gains.
\end{itemize}


\section{Related Work}

\begin{figure*}[t]
    \centering
    \includegraphics[width=0.99\linewidth]{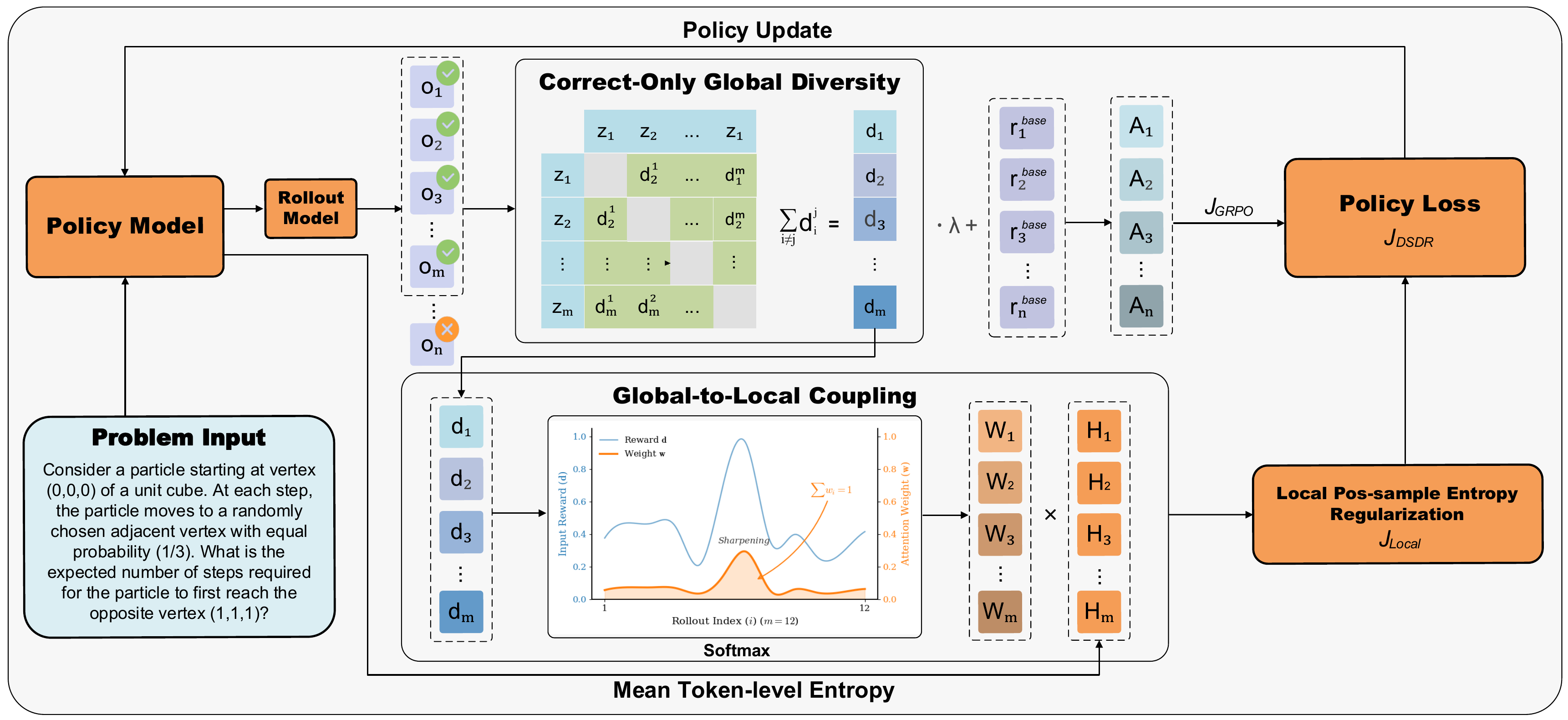}
    \caption{DSDR training pipeline for dual-scale exploration in RL.
Correct-only global diversity promotes exploration across solution modes,
while a global-to-local coupling mechanism allocates length-invariant local entropy regularization to distinctive correct trajectories.
Both signals are integrated into policy updates to enable deep exploration without sacrificing correctness.}
    \label{fig:dsdr_pipeline}
\end{figure*}

\noindent \textbf{RLVR and Exploration in LLMs.} 
Reinforcement learning with verifiable rewards (RLVR) has become a prominent approach for improving LLM reasoning~\cite{cobbe2021training, singh2025openai, guo2025deepseek}. While this training can elicit emergent reasoning behaviors such as verification and self-reflection~\cite{gandhi2025cognitive, wan2025srpo}, it often suffers from limited exploration, where policies converge early to a narrow set of reasoning patterns, resulting in performance plateaus. To mitigate this issue, prior work has explored a range of exploration-enhancing strategies, including increasing policy stochasticity through entropy regularization or temperature adjustment~\cite{hou2025advancing}, modifying optimization objectives via relaxed clipping or pass@k-based rewards~\cite{yu2025dapo, chen2025pass}, and intervening in rollout dynamics to encourage mode switching, such as the sample-then-forget mechanism~\cite{chen2025eepoexplorationenhancedpolicyoptimization}. While these approaches improve exploration from different angles, they either rely on unstructured randomness, objective-level relaxation, or rollout-level interventions, and do not explicitly model how exploration should be coordinated across different scales of reasoning.
Instead, our work attempt to address exploration limitations by structuring diversity directly within policy optimization across global-to-local scales, enabling deep exploration without modifying rollout procedures.

\noindent \textbf{Diversity and Entropy Control for LLM Reasoning. }
Recent studies have explored promoting diversity in LLM reasoning by manipulating uncertainty at different levels of the policy. Token-level methods selectively encourage stochastic actions through entropy bonuses, clipping, or KL constraints~\cite{cui2025entropy, liu2025ettrl, yu2025dapo, agarwal2025unreasonable, shen2025entropy, yao2025diversity}, which can alleviate premature collapse but primarily operate at a local action level. While effective in increasing short-term randomness, these methods do not explicitly encourage diversity across complete reasoning trajectories.
More recent approaches consider diversity at a global level. \citet{chen2025pass} and \citet{walder2025pass} leverage pass@k as a training signal to encourage multiple candidate solutions, while, in a concurrent work, \citet{cui2025entropy} train a partitioning classifier to measure and amplify diversity in the advantage function. Closely related, some approaches~\cite{chen2025post, li2025jointlyreinforcingdiversityquality, hu2025diversity} promotes global diversity among candidate solutions to improve deep exploration. However, above methods treat global and local diversity signals largely independently and do not specify how diversity at different scales should interact during optimization. In contrast, DSDR explicitly decomposes diversity into global and local components and couples them through a global-to-local allocation mechanism, which adaptively concentrates local entropy regularization on more distinctive correct reasoning trajectories. 


\section{Methodology}
\label{sec:method}

\subsection{Preliminaries}
\label{sec:prelim}

We briefly review Group Relative Policy Optimization (GRPO)~\cite{guo2025deepseek}, which serves as the
optimization backbone for reinforcement learning with verifiable rewards (RLVR)
in reasoning tasks.
Given an input prompt $q$, a policy $\pi_\theta$ generates an output sequence
$o=(o_1,\ldots,o_T)$ following autoregressive factorization
\vskip -0.4em
\begin{equation}
\pi_\theta(o \mid q)
=
\prod_{t=1}^{T} \pi_\theta(o_t \mid q, o_{<t}).
\end{equation}
\vskip -0.4em
A verifier provides a scalar reward $r = R(q,o)$ for the completed sequence, which
is typically binary for verifiable reasoning tasks.
GRPO samples a group of $G$ candidate outputs $\{o_i\}_{i=1}^{G}$ from a lagged
behavior policy $\pi_{\theta_{\text{old}}}$ and computes rewards
$\{r_i\}_{i=1}^{G}$.
To obtain a group-relative learning signal, rewards are normalized within each
group to form advantages.
Specifically, the advantage $A_i$ for each response is computed as
\vskip -0.4em
\begin{equation}
\label{eq:advantage}
A_i
=
\frac{r_i - \mathrm{mean}(r_1,r_2,\ldots,r_G)}
{\mathrm{std}(r_1,r_2,\ldots,r_G)},
\end{equation}
\vskip -0.4em
where $\{r_i\}_{i=1}^{G}$ are rewards from the group and
$\mathrm{std}(\cdot)$ denotes the standard deviation with a small constant added
for numerical stability.
Optimization is performed using a PPO-style clipped surrogate objective at the
token level. Let $T_i$ denote the length of $o_i$, and define the token-wise
importance ratio
$\rho_{i,t} = \pi_\theta(o_{i,t}\mid q,o_{i,<t}) /
\pi_{\theta_{\text{old}}}(o_{i,t}\mid q,o_{i,<t})$.
The GRPO objective is then given by
\begin{equation}
\label{eq:grpo}
\begin{aligned}
J_{\textsc{grpo}}(\theta)
&=\mathbb{E}_{q}\Big[
\tfrac{1}{G}\sum_{i=1}^{G}\tfrac{1}{T_i}\sum_{t=1}^{T_i}
\min\!\big(\rho_{i,t}A_i,
\\
&\hspace{-4.1em}
\operatorname{clip}(\rho_{i,t},1-\epsilon_c,1+\epsilon_c)\,A_i\big)
\Big]
-\beta\, D_{\mathrm{KL}}\!\left(\pi_\theta(\cdot\!\mid q)\,\|\,\pi_{\text{ref}}(\cdot\!\mid q)\right).
\end{aligned}
\end{equation}
where $\epsilon_c$ denotes the clipping threshold, $\beta$ controls the KL
regularization strength, and $\pi_{\text{ref}}$ is a fixed reference policy.
GRPO leverages relative comparisons within each group to stabilize optimization, but its learning signal critically depends on reward variation across sampled trajectories.

\subsection{DSDR: Dual-Scale Diversity Regularization}
\label{sec:dsdr}
We adopt the group-based RLVR training protocol defined earlier: for each prompt $q$, we sample a group of $G$ rollouts $\{o_i\}_{i=1}^{G} \sim \pi_{\theta_{\mathrm{old}}}(\cdot \mid q)$ and obtain verifiable rewards $r_i \in \{0,1\}$. DSDR augments the backbone with two diversity regularizers that operate at different scales and are explicitly coupled, as shown in Figure~\ref{fig:dsdr_pipeline}. At the \textbf{global (trajectory) scale}, DSDR assigns extra credit to \emph{correct} solutions that are more distinct within the group, which keeps the learning signal informative even when many rollouts are correct and prevents premature convergence to a single reasoning template. At the \textbf{local (token) scale}, DSDR encourages controlled entropy along positive trajectories to avoid the typical \textit{correct-mode collapse} where the model becomes highly confident token-by-token and loses nearby correct variants. The coupling is key: global distinctiveness determines where local entropy should be strongest, so local regularization expands probability mass around unique correct paths rather than uniformly perturbing all positives.

\subsubsection{Global-Scale Diversity Signals}
\label{sec:div_pattern}

For each rollout $o_i$ in a group $\{o_1, \ldots, o_G\}$, we compute a bounded
per-response diversity score $d(o_i) \in [0,1]$. The design goal is pragmatic:
(i) it should reflect \emph{trajectory-level} differences (not merely token noise),
(ii) it should be \emph{cheap} relative to rollout generation, and
(iii) it should be \emph{well-scaled} so it can be safely mixed into RLVR rewards without dominating correctness. We use two coupling signals.

\noindent \textbf{Semantic Level.}
Let $f_\phi$ be a frozen text encoder that maps a full response $o_i$ to a vector
$z_i \in \mathbb{R}^d$. We normalize embeddings so that cosine similarity becomes
a stable inner product:
\begin{equation}
z_i = f_\phi(o_i), \qquad
\bar{z}_i = \frac{z_i}{\lVert z_i \rVert_2}.
\label{eq:embedding_norm}
\end{equation}
Given two responses $o_i, o_j$, we define their embedding dissimilarity via cosine
distance. We scale it into $[0,1]$ to make it numerically comparable with other
bounded components:
\begin{equation}
\tilde{d}^{\mathrm{emb}}(o_i, o_j)
= \frac{1 - \bar{z}_i^\top \bar{z}_j}{2}
\in [0,1].
\label{eq:embedding_distance}
\end{equation}

The intuition is simple: if two responses encode similar reasoning semantics,
their embeddings align and $\tilde{d}^{\mathrm{emb}}$ is small; if they represent
different solution directions, similarity drops and the distance increases.
To turn pairwise distances into a per-response score, we use the group-average
dissimilarity:
\begin{equation}
D_{\mathrm{emb}}(o_i)
= \frac{1}{G - 1}
\sum_{j \neq i}
\tilde{d}^{\mathrm{emb}}(o_i, o_j).
\label{eq:group_avg_emb}
\end{equation}
This aggregation matters for optimization stability. A single most different
neighbor can be noisy; averaging across $G - 1$ comparisons yields a smoother
signal that is less sensitive to an outlier rollout. Computationally,
Eq.~\eqref{eq:group_avg_emb} is efficient: with
$\bar{Z} = [\bar{z}_1, \ldots, \bar{z}_G] \in \mathbb{R}^{G \times d}$, all pairwise similarities are obtained via a single matrix product $\bar{Z}\bar{Z}^\top$ followed by an elementwise transform.

\noindent \textbf{Formula Level.}
Semantic similarity alone can miss an important axis of reasoning variation in
math tasks: two solutions may appear similar at the surface level while relying
on different symbolic manipulations, or vice versa. To capture this aspect, we
introduce an Formula-level uniqueness signal, following the 
prior work~\cite{wu2024progress, chen2025post, hu2025diversity} while adopting a formulation aligned with our
group-based setting.
Let $S(o_i)$ denote the set of extracted mathematical expressions appearing in response $o_i$. For each formula $f \in S(o_i)$, we define
a binary indicator that measures whether $f$ is unique relative to the rest of
the group:
\begin{equation}
\mathbb{I}_{\mathrm{uniq}}(f, o_i)
=
\mathbbm{1}\!\left[
f \notin \bigcup_{j \neq i} S(o_j)
\right].
\label{eq:formula_indicator}
\end{equation}
The equational diversity of response $o_i$ is then computed as the average
uniqueness of its constituent formulas:
\begin{equation}
D_{\mathrm{eq}}(o_i)
=
\begin{cases}
\dfrac{1}{\lvert S(o_i) \rvert}
\sum\limits_{f \in S(o_i)} \mathbb{I}_{\mathrm{uniq}}(f, o_i),
& \lvert S(o_i) \rvert > 0, \\[6pt]
0,
& \text{otherwise}.
\end{cases}
\label{eq:equational_diversity}
\end{equation}
This definition is intentionally conservative: responses that contain no
detectable formulas contribute no equational novelty. When formulas are present,
the averaging form encourages structural diversity in symbolic reasoning while
remaining invariant to non-mathematical paraphrasing.

\noindent \textbf{Combined Global Diversity.}
Both components are bounded in $[0,1]$, so we combine them into a single global
diversity score by simple averaging:
\begin{equation}
d(o_i)
=
\frac{1}{2}
\bigl(
D_{\mathrm{emb}}(o_i)
+
D_{\mathrm{eq}}(o_i)
\bigr).
\label{eq:combined_diversity}
\end{equation}


This combination yields a bounded and well-scaled scalar signal for reward shaping across diverse reasoning tasks. The embedding-based component captures trajectory-level semantic differences broadly. while the equation-based component provides a complementary, paraphrase-robust notion of novelty when symbolic manipulations are present.

\noindent \textbf{Correct-Only Global Diversity Reward.}
We incorporate global-level diversity into RLVR only when it is consistent with the task objective. In particular, we avoid the failure case where responses are rewarded for being different despite incorrect reasoning. Accordingly, DSDR applies diversity shaping exclusively to positive rollouts and explicitly limits its influence.
To prevent reward hacking~\cite{pan2022effects} and avoid the diversity signal overpowering correctness, we apply a clipped technique~\cite{sullivan2023reward, li2023internally} only to correct rollouts and define the augmented reward as
\begin{equation}
\tilde{r}_i
=
r_i
+
\lambda_d \, \bar d_i \cdot \mathbbm{1}(r_i = 1),\;\;
\bar d_i = \operatorname{clip}\!\left(d(o_i);\,0,\,\sigma_d\right).
\label{eq:augmented_reward}
\end{equation}
Where $\lambda_d \ge 0$ controls the bonus strength and $\sigma_d$ bounds the contribution of the diversity term.
Equation~\eqref{eq:augmented_reward} addresses a concrete optimization issue in group-relative methods: when many rollouts are correct, verifier rewards can become nearly constant within a group, shrinking reward variance and weakening within-group preference gradients. By introducing controlled dispersion among correct solutions, DSDR preserves a meaningful learning signal that differentiates alternative correct trajectories without creating incentives to explore incorrect ones. A formal statement and proof are provided in Appendix~\ref{app:signal}.

\subsubsection{Global-to-Local Coupling Over Correct Trajectories }
Global diversity should not only determine which correct solutions are reinforced, but also where local entropy regularization is most effective. Intuitively, if a correct trajectory is already redundant within the group, expanding its neighborhood adds little coverage. In contrast, when a trajectory is globally distinctive, local expansion around it helps populate underexplored regions of the correct solution manifold.
Let $\mathcal{P} = \{ i \in [G] \mid r_i = 1 \}$ denote the set of correct rollouts.
We allocate local regularization strength via a diversity-weighted softmax over
correct responses:
\begin{equation}
w_i
=
\begin{cases}
\dfrac{\exp(\tau \, \bar{d}_i)}{\sum_{j \in \mathcal{P}} \exp(\tau \, \bar{d}_j)},
& i \in \mathcal{P}, \\[10pt]
0,
& i \notin \mathcal{P},
\end{cases}
\label{eq:diversity_softmax}
\end{equation}
where $\tau > 0$ is a temperature parameter. This construction defines a
probability distribution over correct rollouts (i.e., $\sum_i w_i = 1$ when
$\mathcal{P} \neq \emptyset$). As $\tau$ increases, the allocation concentrates on the most globally distinctive correct solutions; as $\tau \to 0$, it reduces to uniform weighting across correct solutions. This allocation view unifies DSDR’s dual-scale regularization: global diversity
measures inter-trajectory novelty, while local entropy concentrates exploration within trajectories where novelty is highest.
Theoretically, this diversity-softmax coupling can be derived as the
self-normalized policy-gradient weighting induced by a correct-only,
diversity-tilted objective, as claimed in Theorem~\ref{thm:dsdr_tilted_pg} and Appendix~\ref{app:proof_tilted_pg}.
A further principled optimality analysis of this softmax allocation is given in Appendix~\ref{app:allocation}.

\subsubsection{Local Positive-Sample Regularization}

A direct way to promote diversity is to encourage high entropy in the model’s
output distribution. However, for long-form generation, response-level entropy
is confounded by length: longer outputs naturally accumulate more token-level
uncertainty, so higher entropy may partially reflect more tokens. DSDR
instead adopts \emph{token-level conditional entropy}, averaged over timesteps~\cite{agarwal2025unreasonable, cui2025entropy},
so that the objective measures per-step uncertainty rather than length
accumulation.
Let $o = (o_1, \ldots, o_T) \sim \pi_{\theta_{\mathrm{old}}}(\cdot \mid q)$. We start
from the time-averaged conditional entropy:
\begin{equation}
J_{\mathrm{ent}}(\theta)
=
\mathbb{E}_{q,\; o \sim \pi_{\theta_{\mathrm{old}}}(\cdot \mid q)}
\left[
\frac{1}{T}
\sum_{t=1}^{T}
\mathcal{H}\!\left(\pi_\theta(\cdot \mid q, o_{<t})\right)
\right].
\label{eq:token_entropy_objective}
\end{equation}

Using $\mathcal{H}(\pi) = - \mathbb{E}_{a \sim \pi}[\log \pi(a)]$, each entropy term
can be written as an expectation of $\log \pi_\theta(\cdot)$. The remaining
practical issue is that rollouts are sampled from $\pi_{\theta_{\mathrm{old}}}$,
while the inner expectation is taken under $\pi_\theta$. We therefore re-express
the inner expectation using standard importance sampling~\cite{precup2000eligibility, sheng2025espo, yao2025diversity}, allowing it to be
estimated from the observed tokens $o_t$ without resampling:
\begin{equation}
\mathbb{E}_{a \sim \pi_\theta(\cdot \mid s)}[g(a)]
=
\mathbb{E}_{a \sim \pi_{\theta_{\mathrm{old}}}(\cdot \mid s)}
\left[
\frac{\pi_\theta(a \mid s)}{\pi_{\theta_{\mathrm{old}}}(a \mid s)} \, g(a)
\right].
\label{eq:importance_sampling_entropy}
\end{equation}

Applied at each timestep with $s = (q, o_{<t})$ and $g(a) = \log \pi_\theta(a \mid s)$,
this yields a tractable surrogate objective that is differentiable with respect to $\theta$ while reusing group-sampled rollouts. For each group rollout $o_i = (o_{i,1}, \ldots, o_{i,T_i})$, we define the per-token
importance ratio:
\begin{equation}
\rho_{i,t}
=
\frac{\pi_\theta(o_{i,t} \mid q, o_{i,<t})}
{\pi_{\theta_{\mathrm{old}}}(o_{i,t} \mid q, o_{i,<t})}.
\label{eq:token_ratio}
\end{equation}

DSDR then defines the local objective as
\begin{equation}
J_{\mathrm{local}}(\theta)
=
\mathbb{E}\!\left[
- \sum_{i=1}^{G}
\mathbbm{1}(r_i = 1)\, w_i\,
\frac{1}{T_i}
\sum_{t=1}^{T_i}
\rho_{i,t}\, g(o_{i,t})
\right],
\label{eq:local_objective}
\end{equation}
where $g(a) = \log \pi_\theta(a \mid q, o_{<t})$. The formulation reflects three deliberate choices. Time averaging removes incentives to increase length solely to amplify the regularizer, yielding a per-decision signal. Restricting the objective to correct rollouts ensures that local entropy refines correct trajectories rather than encouraging noise on incorrect ones. The allocation weight $w_i$ couples local and global scales, so globally distinctive solutions receive stronger local regularization, focusing
exploration on underrepresented reasoning modes.
Furthermore, an  information-theoretic decomposition and correctness-preservation guarantee under bounded local regularization are given in Appendix~\ref{app:inter_intra}.

\subsubsection{DSDR Objective}

Let $J_{\mathrm{GRPO}}(\theta; \tilde{r})$ denote the group-relative policy
optimization objective defined earlier, computed using the augmented rewards
$\tilde{r}_i$ from Eq.~\eqref{eq:augmented_reward} when forming group advantages.
DSDR optimizes
\begin{equation}
J_{\mathrm{DSDR}}(\theta)
=
J_{\mathrm{GRPO}}(\theta; \tilde{r})
+
\lambda_\ell \, J_{\mathrm{local}}(\theta),
\label{eq:dsdr_objective}
\end{equation}
where $\lambda_\ell \ge 0$ controls the strength of local regularization.
Eq.~\eqref{eq:dsdr_objective} makes the dual-scale structure explicit: the global term induces preferences among correct trajectories, while the local term mitigates token-level collapse around them, jointly enabling broad exploration with stable behavior within the correct set.

\begin{theorem}[Diversity-tilted policy gradient induces DSDR global-to-local softmax coupling]
\label{thm:dsdr_tilted_pg}
Fix a prompt $q$. Let $\bar d(q,o)\in[0,\sigma_d]$ denote a bounded (clipped) global diversity score
for a completed rollout $o$ (e.g., Eq.~\eqref{eq:augmented_reward}), and let $R(q,o)\in\{0,1\}$
be the verifiable reward.
For $\tau>0$, define the \emph{correct-only diversity-tilted} objective
\begin{equation}
\label{eq:tilted_obj}
\begin{aligned}
J_\tau(\theta;q)
&=
\frac{1}{\tau}\log Z_\tau(\theta;q),\\
Z_\tau(\theta;q)
&=
\mathbb{E}_{o\sim \pi_\theta(\cdot\mid q)}
\!\left[
\exp\!\big(\tau \bar d(q,o)\big)\,\mathbbm{1}(R(q,o)=1)
\right].
\end{aligned}
\end{equation}
Assume $Z_\tau(\theta;q)>0$. Then the policy gradient of $J_\tau$ admits the form
\begin{equation}
\label{eq:tilted_pg}
\nabla_\theta J_\tau(\theta;q)
=
\mathbb{E}_{o\sim \pi_\theta(\cdot\mid q)}
\left[
A_\tau^\theta(q,o)\;\nabla_\theta \log \pi_\theta(o\mid q)
\right],
\end{equation}
where the \emph{diversity-tilted advantage} is
\begin{equation}
\label{eq:tilted_adv}
A_\tau^\theta(q,o)
=
\frac{1}{\tau}
\left(
\frac{
\exp\!\big(\tau \bar d(q,o)\big)\,\mathbbm{1}(R(q,o)=1)
}{
Z_\tau(\theta;q)
}
-1
\right).
\end{equation}
Moreover, given i.i.d.\ rollouts $\{o_i\}_{i=1}^G\sim \pi_\theta(\cdot\mid q)$ with rewards $r_i=R(q,o_i)$,
the self-normalized Monte Carlo form of~\eqref{eq:tilted_pg} assigns weights
\begin{equation}
\label{eq:tilted_softmax_mc}
\hat w_i
=
\frac{\exp(\tau \bar d_i)\,\mathbbm{1}(r_i=1)}{\sum_{j=1}^G \exp(\tau \bar d_j)\,\mathbbm{1}(r_j=1)},
\end{equation}
which reduces to a diversity-softmax over correct rollouts:
$\hat w_i \propto \exp(\tau \bar d_i)$ on $\mathcal P=\{i:r_i=1\}$,
matching DSDR's coupling rule in Eq.~\eqref{eq:diversity_softmax}.
\end{theorem}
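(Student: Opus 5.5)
The plan is to differentiate $J_\tau$ directly with the score-function (log-derivative) identity, and then recognize the result as a baseline-corrected policy gradient. Write $\phi_\tau(o)=\exp(\tau\bar d(q,o))\,\mathbbm{1}(R(q,o)=1)$. This is a bounded, $\theta$-independent function of the completed rollout, with $0\le\phi_\tau\le e^{\tau\sigma_d}$. The bound $\bar d\in[0,\sigma_d]$ is exactly what licenses exchanging gradient and expectation: we work with a finite vocabulary and a finite maximum length, or else use dominated convergence with a bounded integrand and an integrable score. Under that exchange,
\[
\nabla_\theta Z_\tau=\sum_o \phi_\tau(o)\nabla_\theta\pi_\theta(o\mid q)=\mathbb{E}_{o\sim\pi_\theta}\!\left[\phi_\tau(o)\nabla_\theta\log\pi_\theta(o\mid q)\right].
\]
The assumption $Z_\tau>0$ makes $\log Z_\tau$ differentiable, so by the chain rule
\[
\nabla_\theta J_\tau=\frac{1}{\tau}\,\mathbb{E}_{o\sim\pi_\theta}\!\left[\frac{\phi_\tau(o)}{Z_\tau}\nabla_\theta\log\pi_\theta(o\mid q)\right].
\]

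The second step introduces the $-1$ baseline. Normalization gives $\mathbb{E}_{o\sim\pi_\theta}[\nabla_\theta\log\pi_\theta(o\mid q)]=\nabla_\theta\sum_o\pi_\theta(o\mid q)=0$. We may therefore subtract $\tfrac{1}{\tau}\cdot 1$ times the score inside the expectation without changing its value. This yields \eqref{eq:tilted_pg} with the advantage $A_\tau^\theta$ of \eqref{eq:tilted_adv}. We should also note that $\mathbb{E}_{\pi_\theta}[A_\tau^\theta]=0$, which mirrors the centering in GRPO.

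The third step is the Monte Carlo statement. We replace both expectations in $\nabla_\theta J_\tau=\tfrac{1}{\tau}\mathbb{E}[\phi_\tau\nabla\log\pi]/\mathbb{E}[\phi_\tau]$ by empirical means over the i.i.d. group $\{o_i\}$. The numerator becomes $\frac1G\sum_i\phi_\tau(o_i)\nabla\log\pi_\theta(o_i\mid q)$, and the estimate of $Z_\tau$ becomes $\frac1G\sum_j\phi_\tau(o_j)$. Taking their ratio, the factors of $1/G$ cancel, so each score term receives weight $\hat w_i$ as in \eqref{eq:tilted_softmax_mc}. The ratio is well defined whenever $\mathcal P\neq\emptyset$. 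The baseline term $-\frac1\tau\cdot\frac1G\sum_i\nabla\log\pi_\theta(o_i\mid q)$ is unweighted and estimates zero, so it does not change which rollouts receive tilted credit.

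Finally, $\mathbbm{1}(r_i=1)$ zeroes out every $i\notin\mathcal P$, and on $\mathcal P$ the indicator equals one. Hence $\hat w_i=\exp(\tau\bar d_i)/\sum_{j\in\mathcal P}\exp(\tau\bar d_j)$, which is identical to $w_i$ in Eq.~\eqref{eq:diversity_softmax}.

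The main point of care is the Monte Carlo step. The self-normalized estimator is a ratio of unbiased estimators, so it is biased at finite $G$ and only consistent by the strong law of large numbers and the continuous mapping theorem, using $Z_\tau>0$. I would state the theorem's claim as identifying the weights, not as an unbiasedness claim. I would also flag the interchange of gradient and expectation, which boundedness of $\phi_\tau$ settles.
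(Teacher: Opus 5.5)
Your proposal is correct and follows the paper's proof essentially step for step. Both arguments apply score-function differentiation of $\log Z_\tau$, then subtract the constant baseline $1/\tau$ using $\mathbb{E}_{\pi_\theta}[\nabla_\theta\log\pi_\theta(o\mid q)]=0$, then take a plug-in ratio of empirical means whose $1/G$ factors cancel to give $\hat w_i$, and finally use the indicator to restrict to $\mathcal P$. Your extra remarks (interchanging gradient and expectation, requiring $\mathcal P\neq\emptyset$ for well-definedness, and the finite-$G$ bias of the self-normalized estimator) are sound refinements that the paper leaves implicit.
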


\section{Experiments}
\label{sec:experiments}
\vskip -0.4em
\subsection{Experiment Settings}
\label{sec:experiment settings}
\vskip -0.4em

\begin{table*}[t]
\caption{\textbf{Results on different reasoning benchmarks.}
We report \textbf{Pass@1} and \textbf{Avg@16} ($\%$) accuracy across different model scales. DSDR consistently outperforms Backbone, GRPO, and DAPO across most benchmarks and achieves the best average performance. Ablation results (w/o GD, w/o GC) show consistent performance drops, suggesting that both global diversity (GD) and global-to-local coupling (GC) regularization play important roles in DSDR.}
\label{tab:main_results}
\begin{center}
\begin{small}
\setlength{\tabcolsep}{6pt}
\begin{tabular*}{\textwidth}{@{\extracolsep{\fill}}lccccc c}
\toprule
\textbf{Models}
& \textbf{AIME24}
& \textbf{AIME25}
& \textbf{MATH500}
& \textbf{Minerva}
& \textbf{Olympiad}
& \textbf{Average} \\
\midrule

\multicolumn{7}{c}{\textbf{Qwen2.5-math-1.5B}, Max Response Length = 4K tokens} \\
\midrule
Backbone
& 10.0 / 3.0  & 3.3 / 3.0 & 27.4 / 21.0 & 5.9 / 3.9 & 13.1 / 9.9 & 11.9 / 8.2 \\
GRPO
& 16.7 / 15.2 & 3.5 / 8.5 & 57.8 / \textbf{58.3} & 14.0 / 15.0 & \textbf{24.0} / 22.9 & 23.2 / 24.0 \\
DAPO
& 13.3 / 14.4 & 6.7 / 6.3 & 54.8 / 55.3 & 15.8 / 15.2 & 22.7 / 21.4 & 22.7 / 22.5 \\
\midrule
\rowcolor{dsdrgreen}
\textbf{DSDR}
& \textbf{20.0} / \textbf{18.8} & 6.7 / \textbf{10.2} & \textbf{59.0} / 57.5 & \textbf{18.0} / 
\textbf{18.0} & 23.1 / \textbf{23.7} & \textbf{25.4} / \textbf{25.6} \\
\midrule
\midrule

\multicolumn{7}{c}{\textbf{Qwen3-1.7B}, Max Response Length = 8K tokens} \\
\midrule
Backbone
& 13.3 / 10.6 & 10.0 / 7.3 & 57.6 / 56.5 & 17.3 / 17.5 & 23.7 / 25.3 & 24.4 / 23.4 \\
GRPO
& 16.7 / 21.0 & 20.0 / 15.8 & 57.4 / 60.7 & 18.4 / 19.9 & 29.5 / 31.2 & 28.4 / 29.7 \\
DAPO
& 20.0 / 20.4 & 10.0 / 21.0 & 63.8 / 64.0 & 21.3 / 22.3 & 32.9 / 32.9  & 29.6 / 32.1 \\
\midrule
\rowcolor{dsdrgreen}
\textbf{DSDR}
& \textbf{36.7} / \textbf{32.1} & 23.2 / \textbf{27.3} & \textbf{64.4} / \textbf{65.4} & 23.3 / 23.4 & \textbf{36.4} / 36.0 & \textbf{36.8} / \textbf{36.8} \\
\hspace{2em}w/o GD
& 23.3 / 20.6 & 13.3/ 19.0 & 61.6 / 63.5 & 20.6 / 21.4 & 29.8 / 32.5 & 29.7 / 31.4 \\
\hspace{2em}w/o GC
& 30.0 / 26.5 & \textbf{23.3} / 24.0 & 63.0 / 64.6 & \textbf{23.5} / 23.4 & 35.9/ \textbf{36.1} & 35.1 / 34.9 \\

\midrule
\midrule

\multicolumn{7}{c}{\textbf{Qwen3-4B}, Max Response Length = 8K tokens} \\
\midrule

Backbone
& 13.33 / 20.63
& 16.67 / 16.04
& 64.8 / 65.75
& 26.10 / 24.61
& 32.79 / 32.69
& 30.74 / 31.94 \\

GRPO
& 36.67 / 37.50
& 33.33 / 35.21
& 64.0 / 63.83
& 25.00 / 23.62
& 37.54 / 37.70
& 39.31 / 39.57 \\

DAPO
& 33.33 / 38.38
& 40.00 / 33.13
& 59.0 / 62.74
& 25.74 / 27.37
& 36.65 / 39.42
& 38.94 / 40.21 \\

\midrule
\rowcolor{dsdrgreen}
\textbf{DSDR}
& \textbf{56.67} / \textbf{52.08}
& \textbf{50.00} / \textbf{46.46}
& \textbf{66.2} / \textbf{66.18}
& \textbf{26.84} / \textbf{27.87}
& \textbf{40.36} / \textbf{42.43}
& \textbf{48.01} / \textbf{46.80} \\

\hspace{2em}w/o GD
& 46.67 / 45.00
& 43.33 / 40.42
& 63.6 / 64.74
& 26.47 / 27.44
& 40.65 / 41.20
& 44.14 / 43.76 \\

\hspace{2em}w/o GC
& 26.67 / 44.38
& 23.33 / 35.63
& 58.0 / 62.78
& 20.21 / 24.59
& 33.98 / 40.60
& 32.04 / 41.20 \\

\bottomrule
\end{tabular*}
\end{small}
\end{center}
\end{table*}

\begin{figure*}[t]
    \vskip -0.6em
    \begin{center}
    \includegraphics[width=\textwidth]{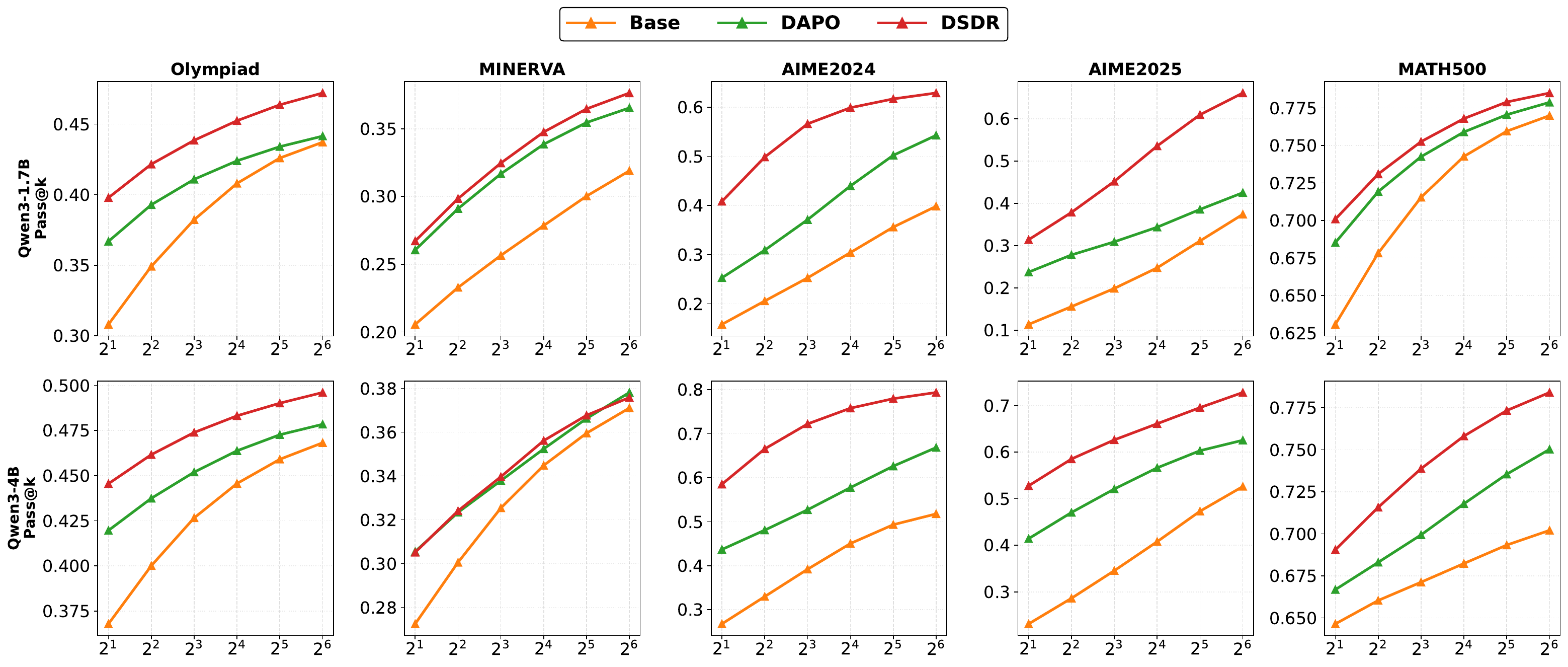}
    \caption{
    Pass@k performance across five benchmarks for both Qwen3-1.7B and Qwen3-4B. The Base models serve as backbones. DSDR consistently outperforms both the Base models and DAPO across all values of $k$..
    }
    \label{fig:passk}
    \end{center}
    \vskip -0.6em
\end{figure*}

\noindent \textbf{Backbone Models.}
For fair comparison, we conduct all experiments on the filtered DAPO-Math-17K~\cite{openr1}, which removes the duplicated samples. Training is performed on four base models with increasing capacity: Qwen2.5-Math-1.5B~\cite{yang2024qwen2}, Qwen3-1.7B and 4B~\cite{yang2025qwen3}. We adopt all-MiniLM-L6-v2~\cite{reimers2019sentence} as a lightweight sentence encoder for extracting response embeddings.

\noindent \textbf{Training Setting.}
We use the same training configuration for all models. The batch size is 256, with 8 rollouts per prompt during policy optimization and a learning rate of 1e-6. The maximum response length is set to 4096 for Qwen2.5-Math-1.5B and 8192 for Qwen3-1.7B and  Qwen3-4B in our experiment setting.

\noindent \textbf{Evaluation Setting.}
Evaluation is conducted on a diverse set of mathematical reasoning benchmarks, including AIME2024~\cite{aime24} and AIME2025, MATH500~\cite{cobbe2021training}, Minerva Math~\cite{lewkowycz2022solving}, and Olympiad-level problems~\cite{he2024olympiadbench}. For each benchmark, we report pass@1, computed from a single rollout per problem, and Avg@16, computed by averaging correctness over 16 independent rollouts. Avg@16 reflects the overall quality and stability of the model’s sampling distribution under a fixed sampling budget. In addition, we evaluate Pass@k for $k \in \{2, 4, ..., 64\}$ by sampling 64 independent rollouts per problem. We compare our method with  GRPO~\cite{shao2024deepseekmath}, DAPO~\cite{yu2025dapo} and the corresponding base models.

\vskip -0.4em
\subsection{Main Results}
\label{sec:main results}
\vskip -0.4em
\noindent \textbf{Overall Performance.}
Our main experimental results are summarized in Table~\ref{tab:main_results}, covering five representative math reasoning benchmarks across three model scales. Overall, DSDR consistently outperforms all compared baselines, including Backbone, GRPO, and DAPO, demonstrating robust and scalable improvements in both Pass@1 and Avg@16 accuracy. On Qwen2.5-Math-1.5B, DSDR achieves the best average performance (25.4 / 25.6), with clear gains on challenging benchmarks such as AIME24, MATH500, and Minerva, where multiple valid reasoning paths exist. These improvements suggest that DSDR better preserves informative learning signals under group-relative optimization by differentiating correct trajectories, mitigating the reward-variance collapse that arises when many rollouts are correct.
The advantage of DSDR becomes more pronounced as model scale increases. On Qwen3-1.7B and Qwen3-4B, DSDR achieves 36.8 / 36.8 and 48.0 / 46.8 average performance respectively, substantially outperforming both GRPO and DAPO at each scale, with especially large margins on AIME24 and AIME25. Importantly, DSDR consistently improves Avg@16 alongside Pass@1, indicating that the gains are not driven by occasional lucky samples but by systematically expanding the diversity of correct reasoning trajectories. Across scales, by promoting exploration exclusively within the correct solution space, DSDR enables more effective and stable exploration in RL-based LLM reasoning, leading to consistent and scalable performance gains.

\begin{figure*}[t]
     \vskip -0.6em
    \begin{center}
    \includegraphics[width=\textwidth]{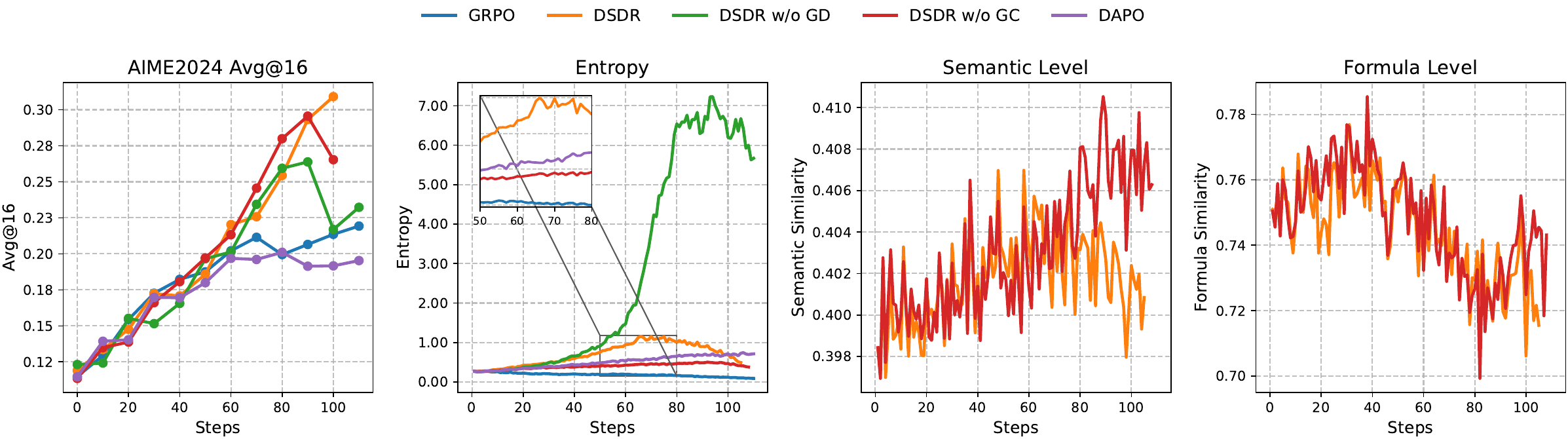}
    \caption{
    Training dynamics across methods conducted on Qwen3-1.7 model. From left to right, we report AIME2024 Avg@16, policy entropy, semantic-level diversity similarity, and formula-level diversity similarity. Results are shown for GRPO, DSDR, DSDR w/o GD, DSDR w/o GC, and DAPO.
    }
    \label{fig:training_dynamics}
    \end{center}
    \vskip -0.6em
\end{figure*}

\begin{figure}[h]
   \vskip -0.8em
    \begin{center}
    \includegraphics[width=\linewidth]{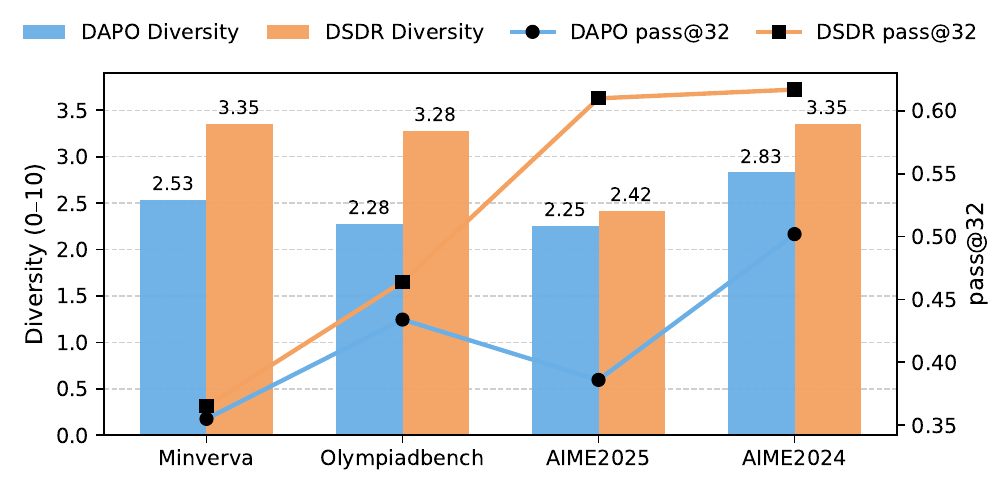}
    \caption{
   We generate 32 test-time rollouts per problem on four benchmarks and evaluate response diversity using an LLM-as-a-Judge (1–10 scale). The figure reports diversity scores and corresponding pass@32 for DAPO and DSDR.
    }
    \label{fig:diversity_pass32}
    \end{center}
   \vskip -0.8em
\end{figure}

\noindent \textbf{Performance on Pass@k Evaluation.}
Figure~\ref{fig:passk} presents the pass@k performance for $k\in \{2,4,...,64\}$ across our method, DAPO and the base model Qwen3-1.7B, qwen3-4B on five benchmarks. Overall, DSDR consistently outperforms the base model and DAPO across a wide range of k, with the most pronounced gains observed on AIME2024, AIME2025, and Olympiad. On these benchmarks, the performance gap between DSDR and the baselines continues to widen as k increases, indicating that DSDR effectively expands the set of correct reasoning trajectories rather than merely sharpening a single dominant solution. On Minerva, where baseline pass@k values are relatively low and correct solutions are sparse, the absolute gains are more modest; nevertheless, DSDR maintains a consistent advantage over both the base model and DAPO across k, suggesting improved exploration even in low-reward regimes. On MATH500, where baseline accuracy is already high, DSDR delivers stable improvements across all k without saturation or degradation at large k. Importantly, DSDR does not exhibit performance drop-offs at high k, highlighting its ability to promote exploration within the correct solution space rather than drifting toward noisy or incorrect samples. These results demonstrate that DSDR yields more reliable and scalable improvements in pass@k.

\subsection{Ablation Study}
\label{sec:ablation study}

We conduct ablation studies on Qwen3-1.7B and 4B. Table~\ref{tab:main_results} reports Avg@16 and Pass@1 for variants without global diversity (GD) and without global-to-local coupling (GC). Removing global diversity causes a clear drop in average performance across benchmarks and model sizes (1.7B and 4B), indicating that trajectory-level diversity is necessary to preserve informative learning signals when many rollouts are correct and verifier rewards saturate. Removing global-to-local coupling also degrades performance, with larger drops on AIME and Olympiad, where discovering multiple valid reasoning strategies is critical. This shows that local entropy alone is insufficient; instead, local regularization must be guided by global distinctiveness to expand underexplored correct trajectories. Overall, the ablations confirm that GD and GC are complementary: GD differentiates correct solutions at the trajectory level, while GC focuses local exploration, jointly enabling the consistent gains of DSDR.

\subsection{Training Dynamics Analysis}
\label{sec:training dynamicss analysis}

Figure~\ref{fig:training_dynamics} shows the training dynamics of DSDR vs. other methods. As training progresses, DSDR achieves consistently higher Avg@16 than GRPO and DAPO on AIME2024, indicating that DSDR improves performance while simultaneously enhancing exploration, preventing the policy from collapsing into a single dominant reasoning pattern. The entropy dynamics further highlight the role of DSDR’s dual-scale design: DSDR w/o GD exhibits a rapid and excessive increase in entropy, reflecting uncontrolled random exploration without global diversity guidance, while DSDR w/o GC, which removes token-level entropy regularization, shows diminishing exploration capacity in later stages as the policy becomes overly concentrated. In contrast, GRPO and DAPO maintain relatively low and flat entropy, suggesting limited exploration. By combining correct-only global diversity (GD) with global-to-local coupling (GC), DSDR achieves a balanced entropy profile that increases exploration without instability. This effect is further reflected in the semantic- and formula-level similarity curves, where DSDR maintains lower semantic similarity and sustained symbolic diversity among rollouts throughout training, indicating that the model continues to explore multiple distinct reasoning trajectories while preserving correctness. Together, these dynamics demonstrate that DSDR enables stable and targeted exploration, preventing both random drift and premature mode collapse.

\subsection{Diversity Analysis}

\label{sec:diversity_analysis}

Figure~\ref{fig:diversity_pass32} compares response diversity and pass@32 across four benchmarks using 32 test-time rollouts. We use GPT-5.2~\cite{singh2025openai} to evaluate the diversity, which aggregates semantic, logical, and formula-level differences, scored on a 1–10 scale. The diversity judge prompt is provided in Appendix~\ref{app:prompt_template}. As shown in the figure, DSDR consistently produces higher diversity scores than DAPO across all datasets, indicating that its generated responses cover a broader range of reasoning strategies rather than collapsing into similar solution patterns. Importantly, these diversity gains are accompanied by higher pass@32 performance, demonstrating that increased diversity does not come at the cost of correctness. It indicates that by applying correct-only global diversity regularization, DSDR encourages distinct correct trajectories, while the global-to-local coupling further expands locally around the most distinctive solutions instead of injecting uniform randomness. As a result, DSDR is able to improve exploration as well as performance, yielding both higher-quality diversity and stronger pass@k performance compared to DAPO.

\begin{figure}[t]
    \vskip -0.6em
    \begin{center}
    \includegraphics[width=\linewidth]{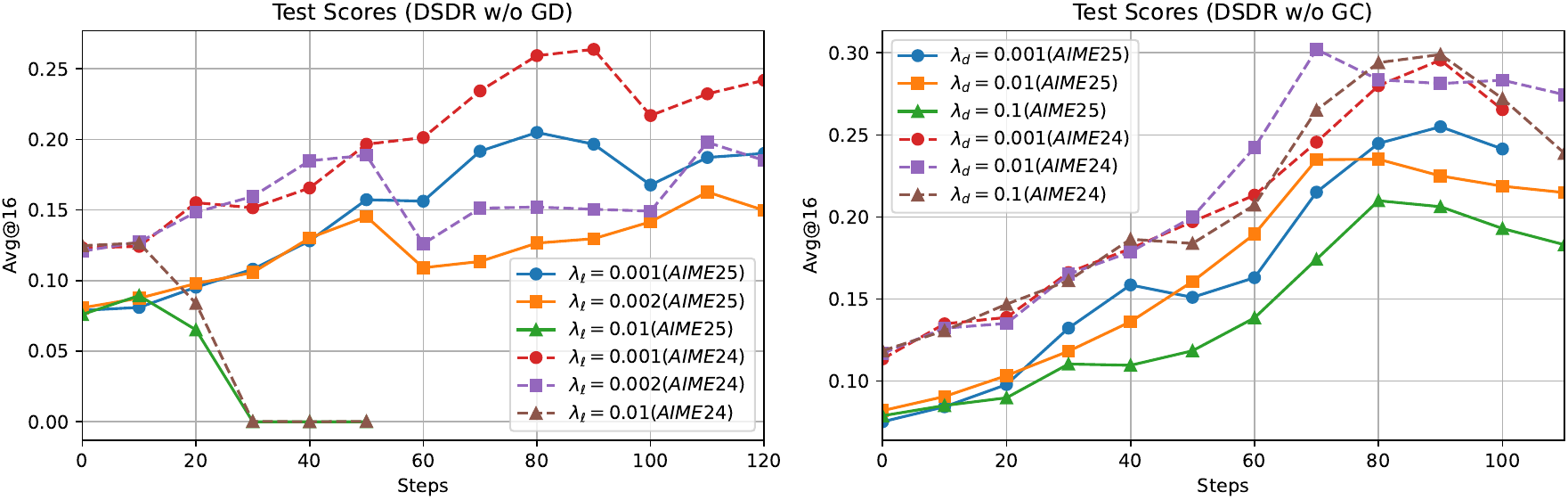}
    \caption{
    Hyperparameter sensitivity of DSDR on Qwen3-1.7B. Left: varying $\lambda_{\ell}$ shows that overly large entropy regularization destabilizes training. Right: $\lambda_d=0.001$ achieves the best and most stable Avg@16 performance on AIME2024/2025.
    }
    \label{fig:hyperparam}
    \end{center}
    \vskip -0.6em
\end{figure}

\subsection{Hyperparameter Sensitivity of $\lambda_{\ell}$ and $\lambda_d$}

\label{sec:diversity_analysis}
We study the sensitivity of DSDR to the local coefficient $\lambda_{\ell}$ and global diversity factor $\lambda_d$ on Qwen3-1.7B (Figure~\ref{fig:hyperparam}). For $\lambda_{\ell} \in \{0.001, 0.002, 0.01\}$, moderate regularization improves performance, while values larger than $0.01$ cause training instability and collapse, indicating that excessive entropy-driven exploration disrupts correctness-aligned optimization. $\lambda_{\ell}=0.001$ yields the most stable and consistently strong results and is used in our main experiments. 
For $\lambda_d$, we observe that $0.001$ achieves the best average performance on AIME2024 and AIME2025 with stable training dynamics, whereas larger values introduce additional variance without consistent gains. We therefore adopt $\lambda_d=0.001$ as the default setting. Overall, DSDR remains stable within a reasonable regularization range.

\section{Conclusion}

In this paper, we introduced \textbf{DSDR}, a correctness-aligned dual-scale diversity regularization framework for RLVR that improves exploration in LLM reasoning. DSDR promotes \emph{global} diversity among \emph{correct} trajectories to sustain multiple solution modes, and applies a \emph{local}, length-invariant token-level entropy regularizer exclusively to correct trajectories to prevent intra-mode entropy collapse. A global-to-local allocation mechanism tightly couples the two scales, focusing local regularization on globally distinctive correct trajectories.
Our analysis shows that bounded local regularization preserves correctness while correct-only global shaping maintains informative learning signals under group-based optimization. Experiments across diverse reasoning benchmarks demonstrate consistent gains in accuracy, pass@k, and training stability, underscoring the importance of coordinating trajectory- and token-level exploration in RLVR for stable and robust policy optimization.

\section{Impact Statements}

This work introduces DSDR, a dual-scale diversity regularization framework for reinforcement learning with verifiable rewards (RLVR) in large language model (LLM) reasoning. By promoting correctness-aligned exploration at both trajectory and token levels, DSDR aims to improve reasoning robustness, stability, and sample efficiency. The proposed approach is intended to support the development of more reliable reasoning-oriented LLMs, with potential benefits for applications that require multi-step decision making and formal reasoning.


\bibliography{main}

@article{liu2024deepseek,
  title={Deepseek-v3 technical report},
  author={Liu, Aixin and Feng, Bei and Xue, Bing and Wang, Bingxuan and Wu, Bochao and Lu, Chengda and Zhao, Chenggang and Deng, Chengqi and Zhang, Chenyu and Ruan, Chong and others},
  journal={arXiv preprint arXiv:2412.19437},
  year={2024}
}

@article{shao2024deepseekmath,
  title={Deepseekmath: Pushing the limits of mathematical reasoning in open language models},
  author={Shao, Zhihong and Wang, Peiyi and Zhu, Qihao and Xu, Runxin and Song, Junxiao and Bi, Xiao and Zhang, Haowei and Zhang, Mingchuan and Li, YK and Wu, Yang and others},
  journal={arXiv preprint arXiv:2402.03300},
  year={2024}
}

@article{yu2025dapo,
  title={Dapo: An open-source llm reinforcement learning system at scale},
  author={Yu, Qiying and Zhang, Zheng and Zhu, Ruofei and Yuan, Yufeng and Zuo, Xiaochen and Yue, Yu and Dai, Weinan and Fan, Tiantian and Liu, Gaohong and Liu, Lingjun and others},
  journal={arXiv preprint arXiv:2503.14476},
  year={2025}
}

@article{comanici2025gemini,
  title={Gemini 2.5: Pushing the frontier with advanced reasoning, multimodality, long context, and next generation agentic capabilities},
  author={Comanici, Gheorghe and Bieber, Eric and Schaekermann, Mike and Pasupat, Ice and Sachdeva, Noveen and Dhillon, Inderjit and Blistein, Marcel and Ram, Ori and Zhang, Dan and Rosen, Evan and others},
  journal={arXiv preprint arXiv:2507.06261},
  year={2025}
}

@article{singh2025openai,
  title={OpenAI GPT-5 System Card},
  author={Singh, Aaditya and Fry, Adam and Perelman, Adam and Tart, Adam and Ganesh, Adi and El-Kishky, Ahmed and McLaughlin, Aidan and Low, Aiden and Ostrow, AJ and Ananthram, Akhila and others},
  journal={arXiv preprint arXiv:2601.03267},
  year={2025}
}

@article{guo2025deepseek,
  title={DeepSeek-R1 incentivizes reasoning in LLMs through reinforcement learning},
  author={Guo, Daya and Yang, Dejian and Zhang, Haowei and Song, Junxiao and Wang, Peiyi and Zhu, Qihao and Xu, Runxin and Zhang, Ruoyu and Ma, Shirong and Bi, Xiao and others},
  journal={Nature},
  volume={645},
  number={8081},
  pages={633--638},
  year={2025},
  publisher={Nature Publishing Group UK London}
}

@article{yang2025qwen3,
  title={Qwen3 technical report},
  author={Yang, An and Li, Anfeng and Yang, Baosong and Zhang, Beichen and Hui, Binyuan and Zheng, Bo and Yu, Bowen and Gao, Chang and Huang, Chengen and Lv, Chenxu and others},
  journal={arXiv preprint arXiv:2505.09388},
  year={2025}
}

@article{liu2025prorl,
  title={Prorl: Prolonged reinforcement learning expands reasoning boundaries in large language models},
  author={Liu, Mingjie and Diao, Shizhe and Lu, Ximing and Hu, Jian and Dong, Xin and Choi, Yejin and Kautz, Jan and Dong, Yi},
  journal={arXiv preprint arXiv:2505.24864},
  year={2025}
}

@article{wu2025invisible,
  title={The invisible leash: Why rlvr may or may not escape its origin},
  author={Wu, Fang and Xuan, Weihao and Lu, Ximing and Liu, Mingjie and Dong, Yi and Harchaoui, Zaid and Choi, Yejin},
  journal={arXiv preprint arXiv:2507.14843},
  year={2025}
}

@article{jiang2025risk,
  title={Risk-Sensitive RL for Alleviating Exploration Dilemmas in Large Language Models},
  author={Jiang, Yuhua and Huang, Jiawei and Yuan, Yufeng and Mao, Xin and Yue, Yu and Zhao, Qianchuan and Yan, Lin},
  journal={arXiv preprint arXiv:2509.24261},
  year={2025}
}

@article{cui2025entropy,
  title={The entropy mechanism of reinforcement learning for reasoning language models},
  author={Cui, Ganqu and Zhang, Yuchen and Chen, Jiacheng and Yuan, Lifan and Wang, Zhi and Zuo, Yuxin and Li, Haozhan and Fan, Yuchen and Chen, Huayu and Chen, Weize and others},
  journal={arXiv preprint arXiv:2505.22617},
  year={2025}
}

@article{yao2025diversity,
  title={Diversity-Aware Policy Optimization for Large Language Model Reasoning},
  author={Yao, Jian and Cheng, Ran and Wu, Xingyu and Wu, Jibin and Tan, Kay Chen},
  journal={arXiv preprint arXiv:2505.23433},
  year={2025}
}

@article{chen2025pass,
  title={Pass@ k training for adaptively balancing exploration and exploitation of large reasoning models},
  author={Chen, Zhipeng and Qin, Xiaobo and Wu, Youbin and Ling, Yue and Ye, Qinghao and Zhao, Wayne Xin and Shi, Guang},
  journal={arXiv preprint arXiv:2508.10751},
  year={2025}
}

@article{shen2025entropy,
  title={On entropy control in llm-rl algorithms},
  author={Shen, Han},
  journal={arXiv preprint arXiv:2509.03493},
  year={2025}
}

@misc{chen2025eepoexplorationenhancedpolicyoptimization,
      title={EEPO: Exploration-Enhanced Policy Optimization via Sample-Then-Forget}, 
      author={Liang Chen and Xueting Han and Qizhou Wang and Bo Han and Jing Bai and Hinrich Schutze and Kam-Fai Wong},
      year={2025},
      eprint={2510.05837},
      archivePrefix={arXiv},
      primaryClass={cs.CL},
      url={https://arxiv.org/abs/2510.05837}, 
}

@article{sullivan2023reward,
  title={Reward scale robustness for proximal policy optimization via DreamerV3 tricks},
  author={Sullivan, Ryan and Kumar, Akarsh and Huang, Shengyi and Dickerson, John and Suarez, Joseph},
  journal={Advances in Neural Information Processing Systems},
  volume={36},
  pages={1352--1362},
  year={2023}
}

@inproceedings{li2023internally,
  title={Internally rewarded reinforcement learning},
  author={Li, Mengdi and Zhao, Xufeng and Lee, Jae Hee and Weber, Cornelius and Wermter, Stefan},
  booktitle={International Conference on Machine Learning},
  pages={20556--20574},
  year={2023},
  organization={PMLR}
}

@article{walder2025pass,
  title={Pass@ K Policy Optimization: Solving Harder Reinforcement Learning Problems},
  author={Walder, Christian and Karkhanis, Deep},
  journal={arXiv preprint arXiv:2505.15201},
  year={2025}
}

@article{agarwal2025unreasonable,
  title={The unreasonable effectiveness of entropy minimization in llm reasoning},
  author={Agarwal, Shivam and Zhang, Zimin and Yuan, Lifan and Han, Jiawei and Peng, Hao},
  journal={arXiv preprint arXiv:2505.15134},
  year={2025}
}

@article{zhang2025right,
  title={Right question is already half the answer: Fully unsupervised llm reasoning incentivization},
  author={Zhang, Qingyang and Wu, Haitao and Zhang, Changqing and Zhao, Peilin and Bian, Yatao},
  journal={arXiv preprint arXiv:2504.05812},
  year={2025}
}

@article{chen2025post,
  title={Post-training Large Language Models for Diverse High-Quality Responses},
  author={Chen, Yilei and Chakraborty, Souradip and Wolf, Lorenz and Paschalidis, Yannis and Pacchiano, Aldo},
  journal={arXiv preprint arXiv:2509.04784},
  year={2025}
}

@misc{li2025jointlyreinforcingdiversityquality,
      title={Jointly Reinforcing Diversity and Quality in Language Model Generations}, 
      author={Tianjian Li and Yiming Zhang and Ping Yu and Swarnadeep Saha and Daniel Khashabi and Jason Weston and Jack Lanchantin and Tianlu Wang},
      year={2025},
      eprint={2509.02534},
      archivePrefix={arXiv},
      primaryClass={cs.CL},
      url={https://arxiv.org/abs/2509.02534}, 
}

@article{hu2025diversity,
  title={Diversity-incentivized exploration for versatile reasoning},
  author={Hu, Zican and Zhang, Shilin and Li, Yafu and Yan, Jianhao and Hu, Xuyang and Cui, Leyang and Qu, Xiaoye and Chen, Chunlin and Cheng, Yu and Wang, Zhi},
  journal={arXiv preprint arXiv:2509.26209},
  year={2025}
}

@article{cobbe2021training,
  title={Training verifiers to solve math word problems},
  author={Cobbe, Karl and Kosaraju, Vineet and Bavarian, Mohammad and Chen, Mark and Jun, Heewoo and Kaiser, Lukasz and Plappert, Matthias and Tworek, Jerry and Hilton, Jacob and Nakano, Reiichiro and others},
  journal={arXiv preprint arXiv:2110.14168},
  year={2021}
}

@article{gandhi2025cognitive,
  title={Cognitive behaviors that enable self-improving reasoners, or, four habits of highly effective stars},
  author={Gandhi, Kanishk and Chakravarthy, Ayush and Singh, Anikait and Lile, Nathan and Goodman, Noah D},
  journal={arXiv preprint arXiv:2503.01307},
  year={2025}
}

@article{hou2025advancing,
  title={Advancing language model reasoning through reinforcement learning and inference scaling},
  author={Hou, Zhenyu and Lv, Xin and Lu, Rui and Zhang, Jiajie and Li, Yujiang and Yao, Zijun and Li, Juanzi and Tang, Jie and Dong, Yuxiao},
  journal={arXiv preprint arXiv:2501.11651},
  year={2025}
}

@article{liu2025ettrl,
  title={Ettrl: Balancing exploration and exploitation in llm test-time reinforcement learning via entropy mechanism},
  author={Liu, Jia and He, ChangYi and Lin, YingQiao and Yang, MingMin and Shen, FeiYang and Liu, ShaoGuo},
  journal={arXiv preprint arXiv:2508.11356},
  year={2025}
}

@article{wu2024progress,
  title={Progress or regress? self-improvement reversal in post-training},
  author={Wu, Ting and Li, Xuefeng and Liu, Pengfei},
  journal={arXiv preprint arXiv:2407.05013},
  year={2024}
}

@article{sheng2025espo,
  title={Espo: Entropy importance sampling policy optimization},
  author={Sheng, Yuepeng and Huang, Yuwei and Liu, Shuman and Zhang, Haibo and Zeng, Anxiang},
  journal={arXiv preprint arXiv:2512.00499},
  year={2025}
}

@article{precup2000eligibility,
  title={Eligibility traces for off-policy policy evaluation},
  author={Precup, Doina and Sutton, Richard S and Singh, Satinder},
  year={2000}
}

@misc{openr1,
    title = {Open R1: A fully open reproduction of DeepSeek-R1},
    url = {https://github.com/huggingface/open-r1},
    author = {{Hugging Face}},
    month = {January},
    year = {2025}
}

@article{yang2024qwen2,
  title={Qwen2. 5-math technical report: Toward mathematical expert model via self-improvement},
  author={Yang, An and Zhang, Beichen and Hui, Binyuan and Gao, Bofei and Yu, Bowen and Li, Chengpeng and Liu, Dayiheng and Tu, Jianhong and Zhou, Jingren and Lin, Junyang and others},
  journal={arXiv preprint arXiv:2409.12122},
  year={2024}
}

@article{reimers2019sentence,
  title={Sentence-bert: Sentence embeddings using siamese bert-networks},
  author={Reimers, Nils and Gurevych, Iryna},
  journal={arXiv preprint arXiv:1908.10084},
  year={2019}
}

@article{pan2022effects,
  title={The effects of reward misspecification: Mapping and mitigating misaligned models},
  author={Pan, Alexander and Bhatia, Kush and Steinhardt, Jacob},
  journal={arXiv preprint arXiv:2201.03544},
  year={2022}
}

@book{cover1999elements,
  title={Elements of information theory},
  author={Cover, Thomas M},
  year={1999},
  publisher={John Wiley \& Sons}
}

@article{schulman2017proximal,
  title={Proximal policy optimization algorithms},
  author={Schulman, John and Wolski, Filip and Dhariwal, Prafulla and Radford, Alec and Klimov, Oleg},
  journal={arXiv preprint arXiv:1707.06347},
  year={2017}
}

@article{jaynes1957information,
  title={Information theory and statistical mechanics},
  author={Jaynes, Edwin T},
  journal={Physical review},
  volume={106},
  number={4},
  pages={620},
  year={1957},
  publisher={APS}
}

@book{boyd2004convex,
  title={Convex optimization},
  author={Boyd, Stephen and Vandenberghe, Lieven},
  year={2004},
  publisher={Cambridge university press}
}

@article{williams1992simple,
  title={Simple statistical gradient-following algorithms for connectionist reinforcement learning},
  author={Williams, Ronald J},
  journal={Machine learning},
  volume={8},
  number={3},
  pages={229--256},
  year={1992},
  publisher={Springer}
}

@article{sutton1999policy,
  title={Policy gradient methods for reinforcement learning with function approximation},
  author={Sutton, Richard S and McAllester, David and Singh, Satinder and Mansour, Yishay},
  journal={Advances in neural information processing systems},
  volume={12},
  year={1999}
}

@article{wan2025srpo,
  title={Srpo: Enhancing multimodal llm reasoning via reflection-aware reinforcement learning},
  author={Wan, Zhongwei and Dou, Zhihao and Liu, Che and Zhang, Yu and Cui, Dongfei and Zhao, Qinjian and Shen, Hui and Xiong, Jing and Xin, Yi and Jiang, Yifan and others},
  journal={arXiv preprint arXiv:2506.01713},
  year={2025}
}

@misc{aime24,
      title={American Invitational Mathematics Examination (AIME) 2024}, 
      author={Zhang, Yifan and Math-AI, Team},
      year={2024},
}

@article{lewkowycz2022solving,
  title={Solving quantitative reasoning problems with language models},
  author={Lewkowycz, Aitor and Andreassen, Anders and Dohan, David and Dyer, Ethan and Michalewski, Henryk and Ramasesh, Vinay and Slone, Ambrose and Anil, Cem and Schlag, Imanol and Gutman-Solo, Theo and others},
  journal={Advances in neural information processing systems},
  volume={35},
  pages={3843--3857},
  year={2022}
}

@inproceedings{he2024olympiadbench,
  title={Olympiadbench: A challenging benchmark for promoting agi with olympiad-level bilingual multimodal scientific problems},
  author={He, Chaoqun and Luo, Renjie and Bai, Yuzhuo and Hu, Shengding and Thai, Zhen and Shen, Junhao and Hu, Jinyi and Han, Xu and Huang, Yujie and Zhang, Yuxiang and others},
  booktitle={Proceedings of the 62nd Annual Meeting of the Association for Computational Linguistics (Volume 1: Long Papers)},
  pages={3828--3850},
  year={2024}
}

@inproceedings{sheng2025hybridflow,
  title={Hybridflow: A flexible and efficient rlhf framework},
  author={Sheng, Guangming and Zhang, Chi and Ye, Zilingfeng and Wu, Xibin and Zhang, Wang and Zhang, Ru and Peng, Yanghua and Lin, Haibin and Wu, Chuan},
  booktitle={Proceedings of the Twentieth European Conference on Computer Systems},
  pages={1279--1297},
  year={2025}
}
\bibliographystyle{icml2026}

\newpage
\appendix
\onecolumn
\section{Implementation Details}
\label{app:implementation_details}

We provide more details for experiments in Section~\ref{sec:experiments}.
We provide additional experimental details in Section~\ref{sec:experiments}.
All models are trained using the \textsc{verl} framework~\cite{sheng2025hybridflow} and deployed on 8$\times$ NVIDIA A100 GPUs (40GB). Table~\ref{tab:training_details} and Table~\ref{tab:evaluation_settings} summarize the training and evaluation hyperparameters.
Unless otherwise specified, we adopt a rollout group size of $n=8$, a learning rate of $1\times10^{-6}$, and binary verifier rewards. We conduct hyperparameter sweeps over the global diversity scaling factor $\lambda_d \in \{0.001, 0.01, 0.1\}$, the local regularization coefficient $\lambda_{\ell} \in \{0.001, 0.002, 0.01\}$, and the coupling temperature $\tau \in \{1, 5, 10\}$. Empirically, we find that $\lambda_d=0.001$, $\lambda_{\ell}=0.001$, and $\tau=5$ consistently yield the best or near-best performance across benchmarks; these values are therefore used as defaults in all reported experiments.

\begin{table*}[h]
\begin{center}
\caption{Summary of training details.}
\label{tab:training_details}
\begin{tabular*}{\textwidth}{@{\extracolsep{\fill}} l l}
\toprule
\multicolumn{2}{l}{\textbf{Training Settings}} \\
\midrule
Hardware & 8$\times$A100 GPUs (40GB) \\
Base models & Qwen2.5-Math-1.5B / Qwen3-1.7B, 4B \\
Training dataset & open-r1/DAPO-Math-17K-Processed \\
Max response length & 4096 / 8192 \\
Batch / mini-batch size & 256 / 16 \\
Rollout group size $n$ & 8 \\
Learning rate & $1 \times 10^{-6}$ \\
Temperature (training) & 1.0 \\
Clip range $(\epsilon_{\text{low}}, \epsilon_{\text{high}})$ & (0.2, 0.28) / (0.2, 0.2) for GRPO only \\
Reward type & Binary reward \\
Coupling temperature parameter $\tau$  &  $\{1, 5, 10\}$\\
Reward scaling factor $\lambda_{d}$ & $\{0.001, 0.01, 0.1 \}$ \\
Regularization coefficient $\lambda_{\ell}$ & $\{0.001, 0.002, 0.01\}$ \\
\bottomrule
\end{tabular*}
\end{center}
\vskip -0.1in
\end{table*}

\begin{table*}[h]
\begin{center}
\caption{Evaluation settings.}
\label{tab:evaluation_settings}
\begin{tabular*}{\textwidth}{@{\extracolsep{\fill}} l l}
\toprule
\multicolumn{2}{l}{\textbf{Evaluation settings}} \\
\midrule
Max response length & 4096 / 8192 \\
Top-$p$ (eval) & 0.9 \\
Temperature (eval) & 0.1 for Pass@1; 0.7 for Avg@16 \\
\bottomrule
\end{tabular*}
\end{center}
\vskip -0.1in
\end{table*}

\section{Additional Results}
\label{app:add_results}

\subsection{Additional Training Dynamics Analysis}
\label{app:add_training_dynamics}

As shown in figure~\ref{fig:add_training_dynamics}, DSDR steadily improves both Avg@16 and Pass@16 over training, indicating that performance gains are accompanied by sustained exploration rather than convergence to a narrow solution mode. In contrast, GRPO and DAPO display slower growth and earlier saturation, suggesting limited ability to expand the set of correct solutions. And removing global diversity (w/o GD) results in large fluctuations in the policy-gradient loss, reflecting unstable learning when trajectory-level differentiation among correct rollouts is absent. on the other hand, without global-to-local coupling (w/o GC), leads to weaker late-stage improvements and a sharp increase in the clip ratio, indicating overly aggressive updates and reduced robustness as token-level exploration diminishes. By jointly enforcing correct-only global diversity and diversity-guided local regularization, DSDR maintains controlled updates and effective exploration throughout training, which translates into more reliable and higher final performance.

\begin{figure*}[t]
    \vskip 0.1in
    \begin{center}
    \includegraphics[width=\textwidth]{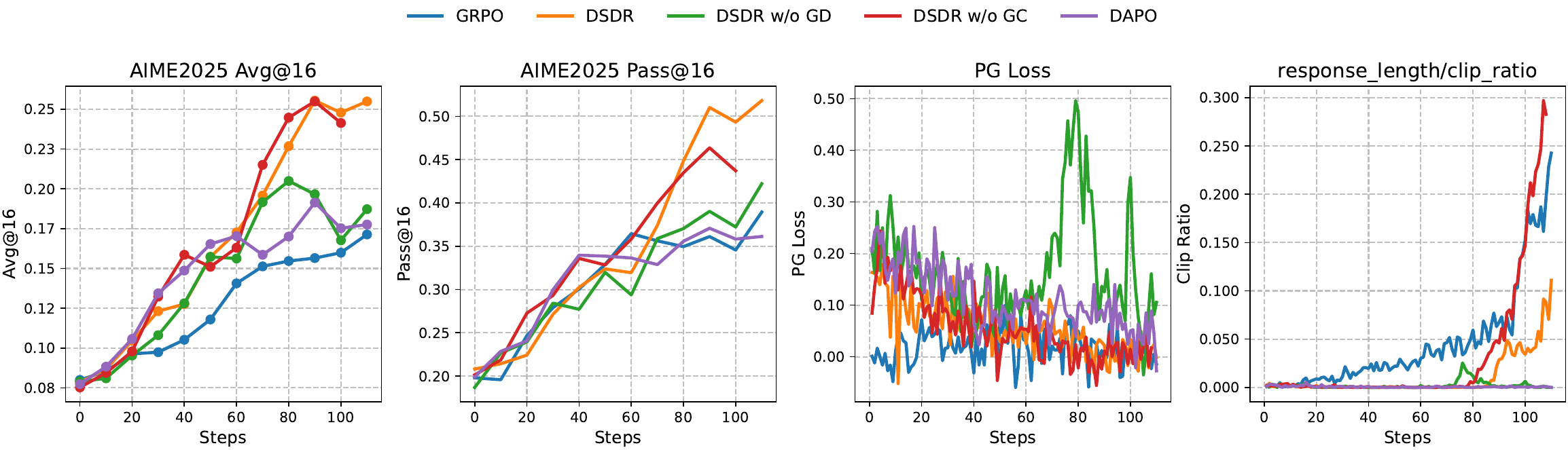}
    \caption{
    Training dynamics across methods conducted on Qwen3-1.7B model. From left to right, we show Avg@16 and Pass@16 for AIME2025, PG loss, and response length/clip ratio, comparing GRPO, DSDR, DSDR w/o GD, DSDR w/o GC, and DAPO.
    }
    \label{fig:add_training_dynamics}
    \end{center}
\end{figure*}

\subsection{Case Study}
\label{app:case_study}

In this section, we present generated samples during testing. One sample with 16 test-time rollouts, DSDR achieves 7 correct solutions, compared to only 2 for DAPO. We show two example responses generated by DSDR, which arrive at the correct answer through distinct reasoning processes, illustrating its ability to explore multiple valid solution paths. In contrast, we present two responses generated by DAPO, which exhibit more limited diversity. This demonstrates that DSDR improves exploration without sacrificing solution fidelity: by maintaining controlled diversity across rollouts, the model avoids mode collapse while consistently discovering correct reasoning paths, leading to stronger overall performance.

\subsubsection{Samples generated by DSDR}
\label{app:sameples_from_DSDR}

Two generated samples by DSDR are shown below, and the yellow boxes highlight the different solution strategies used to solve this problem.

\begin{tcolorbox}[
  enhanced,
  breakable,
  width=\textwidth,
  enlarge left by=-\parindent,
  enlarge right by=-\parindent,
  colback=orange!6,
  colframe=orange!65!black,
  arc=4pt,
  boxrule=0.9pt,
  left=12pt,
  right=12pt,
  top=10pt,
  bottom=10pt,
  before skip=10pt,
  after skip=12pt,
  title={\textbf{Generated Sample: Answer 1}}
]

\textbf{Problem.}

Let $N$ be the greatest four-digit positive integer with the property that whenever one of its digits is changed to $1$, the resulting number is divisible by $7$. Let $Q$ and $R$ be the quotient and remainder, respectively, when $N$ is divided by $1000$. Find $Q+R$. \textbf{Ground Truth:} \textcolor{red}{\boxed{699}}

\vskip 0.8em
\textbf{Answer.}

We are given a four-digit number $ N $ such that whenever one of its digits is changed to 1, the resulting number is divisible by 7. We are to find $ Q + R $, where $ N = 1000Q + R $, and $ Q $ and $ R $ are the quotient and remainder when $ N $ is divided by 1000.

\subsubsection*{Step 1: Understand the condition}

Let’s consider all possible changes to the digits of $ N $, and the resulting number must be divisible by 7.

Let $ N = ABCD $, where $ A, B, C, D $ are digits (each from 0 to 9, with $ A \neq 0 $).

We must ensure that \textbf{for every digit position} (i.e., for each digit $ A, B, C, D $), changing that digit to 1 results in a number divisible by 7.

This means:

- Changing the first digit $ A $ to 1: $ 1BCD $ must be divisible by 7.
- Changing the second digit $ B $ to 1: $ AB1D $ must be divisible by 7.
- Changing the third digit $ C $ to 1: $ ABD1 $ must be divisible by 7.
- Changing the fourth digit $ D $ to 1: $ ABC1 $ must be divisible by 7.

So we have four divisibility conditions:
\[
1BCD \equiv 0,\quad AB1D \equiv 0,\quad ABD1 \equiv 0,\quad ABC1 \equiv 0 \pmod{7}.
\]

\subsubsection*{Step 2: Represent the numbers algebraically}

Let $ N = 1000A + 100B + 10C + D $.

The four conditions become:
\[
\begin{aligned}
1000 + 100B + 10C + D &\equiv 0 \pmod{7}, \\
1000A + 100 + 10C + D &\equiv 0 \pmod{7}, \\
1000A + 100B + 10 + D &\equiv 0 \pmod{7}, \\
1000A + 100B + 10C + 1 &\equiv 0 \pmod{7}.
\end{aligned}
\]

\subsubsection*{Step 3: Use modular arithmetic to simplify}

Using
\[
1000 \equiv 6,\quad 100 \equiv 2,\quad 10 \equiv 3 \pmod{7},
\]
we obtain four congruences, which are then manipulated to derive relations among $A,B,C,D$.

\subsubsection*{Step 4: Solving relations}

Through successive subtraction and modular inversion, we find:
\[
A = 5,\quad B = 6.
\]

\begin{tcolorbox}[
  colback=yellow!85,
  colframe=yellow!60!black,
  arc=2pt,
  boxrule=0pt,
  left=4pt,
  right=4pt,
  top=4pt,
  bottom=4pt
]
Solving the remaining congruences yields two admissible digit pairs:
\[
(C, D) \in \{(2,4), (9,4)\}.
\]
\end{tcolorbox}

\begin{tcolorbox}[
  colback=yellow!85,
  colframe=yellow!60!black,
  arc=2pt,
  boxrule=0pt,
  left=4pt,
  right=4pt,
  top=4pt,
  bottom=4pt
]
Each candidate is then explicitly checked by substituting back into all four digit-replacement conditions to verify divisibility by $7$.
\end{tcolorbox}

Both $5624$ and $5694$ satisfy the conditions. Since the problem asks for the greatest such number, we select
\[
N = 5694.
\]

Thus,
\[
Q = 5,\quad R = 694,\quad Q+R = \boxed{699}.
\]

\subsubsection*{Final Answer}
\[
\boxed{699}
\]

\end{tcolorbox}

\begin{tcolorbox}[
  enhanced,
  breakable,
  width=\textwidth,
  enlarge left by=-\parindent,
  enlarge right by=-\parindent,
  colback=orange!6,
  colframe=orange!65!black,
  arc=4pt,
  boxrule=0.9pt,
  left=12pt,
  right=12pt,
  top=10pt,
  bottom=10pt,
  before skip=12pt,
  after skip=14pt,
  title={\textbf{Generated Sample: Answer 2}}
]

\textbf{Problem.}

Let $N$ be the greatest four-digit positive integer with the property that whenever one of its digits is changed to $1$, the resulting number is divisible by $7$. Let $Q$ and $R$ be the quotient and remainder, respectively, when $N$ is divided by $1000$. Find $Q+R$. \textbf{Ground Truth:} \textcolor{red}{\boxed{699}}

\vskip 0.8em
\textbf{Answer.}

We are given a four-digit number $ N $, such that whenever one of its digits is changed to 1, the resulting number is divisible by 7. We are to find $ Q + R $, where $ Q $ is the quotient and $ R $ is the remainder when $ N $ is divided by 1000.

\subsubsection*{Step 1: Understanding the Problem}

We need to find the \textbf{greatest} four-digit number $ N $ such that \textbf{any single digit change to 1} results in a number divisible by 7.

Let $ N = ABCD $, where $ A, B, C, D $ are digits with $ A \neq 0 $.

For each digit position, changing that digit to 1 must yield a number divisible by 7.

\subsubsection*{Step 2: Formulating Conditions}

\textbf{Case 1:} Change the thousands digit $ A $ to 1:
\[
1000 + BCD \equiv 0 \pmod{7}
\Rightarrow BCD \equiv 1 \pmod{7}.
\]

\textbf{Case 2:} Change the hundreds digit $ B $ to 1:
\[
1000A + 100 + 10C + D \equiv 0 \pmod{7}
\Rightarrow 6A + 3C + D \equiv 5 \pmod{7}.
\]

\textbf{Case 3:} Change the tens digit $ C $ to 1:
\[
1000A + 100B + 10 + D \equiv 0 \pmod{7}
\Rightarrow 6A + 2B + D \equiv 4 \pmod{7}.
\]

\textbf{Case 4:} Change the units digit $ D $ to 1:
\[
1000A + 100B + 10C + 1 \equiv 0 \pmod{7}
\Rightarrow 6A + 2B + 3C \equiv 6 \pmod{7}.
\]

\subsubsection*{Step 3: Summarize the Conditions}

Using $100 \equiv 2$, $10 \equiv 3 \pmod{7}$, we obtain:
\[
2B + 3C + D \equiv 1 \pmod{7}.
\]

Together with the other three congruences, we form a system in $A,B,C,D$.

\subsubsection*{Step 4: Solve the System of Congruences}

By subtracting equations and eliminating variables, we obtain:
\[
3A - B \equiv 2 \pmod{7}, \qquad
3C - D \equiv 2 \pmod{7}, \qquad
2B - 3C \equiv 6 \pmod{7}.
\]

\subsubsection*{Step 5: Express Variables}

Solving step-by-step yields:
\[
A = 5, \quad B = 6,
\]
and
\[
C \in \{2, 9\}, \quad D = 4.
\]

\begin{tcolorbox}[
  colback=yellow!85,
  colframe=yellow!60!black,
  arc=2pt,
  boxrule=0pt,
  left=4pt,
  right=4pt,
  top=4pt,
  bottom=4pt
]
This branching step yields two admissible candidates:
\[
N \in \{5624,\; 5694\}.
\]
\end{tcolorbox}

\subsubsection*{Step 8: Verification}

All four digit-replacement conditions are explicitly checked and satisfied for
\[
N = 5694.
\]

\subsubsection*{Step 9: Compute $Q+R$}

Since $5694 = 1000 \cdot 5 + 694$, we have:
\[
Q = 5,\quad R = 694,\quad Q+R = \boxed{699}.
\]

\subsubsection*{Final Answer}
\[
\boxed{699}
\]

\end{tcolorbox}

\subsubsection{Samples generated by DAPO}
\label{app:samples_generated_DAPO}

Two samples generated by DAPO are shown below, and the red boxes highlight the erroneous solution strategies that lead to failure.

\begin{tcolorbox}[
  enhanced,
  breakable,
  width=\textwidth,
  enlarge left by=-\parindent,
  enlarge right by=-\parindent,
  colback=orange!6,
  colframe=orange!65!black,
  arc=4pt,
  boxrule=0.9pt,
  left=12pt,
  right=12pt,
  top=10pt,
  bottom=10pt,
  before skip=12pt,
  after skip=14pt,
  title={\textbf{Generated Sample: Answer 1}}
]

\textbf{Problem.}

Let $N$ be the greatest four-digit positive integer with the property that whenever one of its digits is changed to $1$, the resulting number is divisible by $7$. Let $Q$ and $R$ be the quotient and remainder, respectively, when $N$ is divided by $1000$. Find $Q+R$. \textbf{Ground Truth:} \textcolor{red}{\boxed{699}}

\vskip 0.8em
\textbf{Answer.}

We are given a \textbf{four-digit number $ N $} such that \textbf{any change of one digit to 1 results in a number divisible by 7}.

Let’s denote $ N $ as a four-digit number:
\[
N = 1000a + 100b + 10c + d
\]
Where:
\begin{itemize}
  \item $ a, b, c, d $ are digits (integers from 0 to 9, with $ a \neq 0 $ since $ N $ is a four-digit number),
  \item $ a \in \{1,2,\dots,9\} $, $ b,c,d \in \{0,1,\dots,9\} $.
\end{itemize}

\subsubsection*{Step 1: Understanding the condition}

The problem says: \textbf{Whenever one digit is changed to 1, the resulting number is divisible by 7}.

So, there are four different cases:
\begin{enumerate}
  \item Change $ a $ to 1: $ 1000 + 100b + 10c + d $
  \item Change $ b $ to 1: $ 1000a + 100 + 10c + d $
  \item Change $ c $ to 1: $ 1000a + 100b + 10 + d $
  \item Change $ d $ to 1: $ 1000a + 100b + 10c + 1 $
\end{enumerate}

Each must be divisible by 7, giving:
\[
\begin{aligned}
1000 + 100b + 10c + d &\equiv 0 \pmod{7}, \\
1000a + 100 + 10c + d &\equiv 0 \pmod{7}, \\
1000a + 100b + 10 + d &\equiv 0 \pmod{7}, \\
1000a + 100b + 10c + 1 &\equiv 0 \pmod{7}.
\end{aligned}
\]

Using
\[
1000 \equiv 6,\quad 100 \equiv 2,\quad 10 \equiv 3 \pmod{7},
\]
we obtain:
\[
\begin{aligned}
2b + 3c + d &\equiv 1 \pmod{7} \quad (1)\\
6a + 3c + d &\equiv 5 \pmod{7} \quad (2)\\
6a + 2b + d &\equiv 4 \pmod{7} \quad (3)\\
6a + 2b + 3c &\equiv 6 \pmod{7} \quad (4)
\end{aligned}
\]

We now attempt to solve this system of congruences.

\begin{tcolorbox}[
  colback=red!15,
  colframe=red!60!black,
  arc=2pt,
  boxrule=0pt,
  left=4pt,
  right=4pt,
  top=4pt,
  bottom=4pt
]
\textbf{Logical error starts here (unsupported assumption).}

Assume $ a = 1 $ (the smallest possible digit, since $ N $ is a four-digit number), then try values of $ b, c, d $ that satisfy the congruences.
\end{tcolorbox}

From this point onward, the reasoning proceeds by ad-hoc guessing and trial-based arguments that are not logically implied by the modular system above.

Eventually, the solution concludes:
\[
\boxed{Q = 111,\quad R = 1,\quad Q+R = 112}
\]

\subsubsection*{Final Answer}
\[
\boxed{112}
\]

\end{tcolorbox}

\begin{tcolorbox}[
  enhanced,
  breakable,
  width=\textwidth,
  enlarge left by=-\parindent,
  enlarge right by=-\parindent,
  colback=orange!6,
  colframe=orange!65!black,
  arc=4pt,
  boxrule=0.9pt,
  left=12pt,
  right=12pt,
  top=10pt,
  bottom=10pt,
  before skip=12pt,
  after skip=14pt,
  title={\textbf{Generated Sample: Answer 2}}
]

\textbf{Problem.}

Let $N$ be the greatest four-digit positive integer with the property that whenever one of its digits is changed to $1$, the resulting number is divisible by $7$. Let $Q$ and $R$ be the quotient and remainder, respectively, when $N$ is divided by $1000$. Find $Q+R$. \textbf{Ground Truth:} \textcolor{red}{\boxed{699}}

\vskip 0.8em
\textbf{Answer.}

We are asked to find the \textbf{greatest four-digit number $ N $} such that \textbf{if any one of its digits is changed to 1}, the resulting number is divisible by 7.
Then, given $ N = 1000Q + R $, we are to find $ Q + R $.

\subsubsection*{Step 1: Understand the condition}

Let’s denote $ N $ as a four-digit number:
\[
N = 1000a + 100b + 10c + d
\]
where $ a,b,c,d \in \{0,1,\dots,9\} $, with $ a \neq 0 $.

Changing each digit to 1 gives the conditions:
\[
\begin{aligned}
1000 + 100b + 10c + d &\equiv 0 \pmod{7},\\
1000a + 100 + 10c + d &\equiv 0 \pmod{7},\\
1000a + 100b + 10 + d &\equiv 0 \pmod{7},\\
1000a + 100b + 10c + 1 &\equiv 0 \pmod{7}.
\end{aligned}
\]

Using
\[
1000 \equiv 6,\quad 100 \equiv 2,\quad 10 \equiv 3 \pmod{7},
\]
we obtain:
\[
\begin{aligned}
2b + 3c + d &\equiv 1 \pmod{7},\\
6a + 3c + d &\equiv 5 \pmod{7},\\
6a + 2b + d &\equiv 4 \pmod{7},\\
6a + 2b + 3c &\equiv 6 \pmod{7}.
\end{aligned}
\]

\begin{tcolorbox}[
  colback=red!15,
  colframe=red!60!black,
  arc=2pt,
  boxrule=0pt,
  left=4pt,
  right=4pt,
  top=4pt,
  bottom=4pt
]
\textbf{Logical error starts here (invalid constraint assumption).}

Let’s suppose that all four expressions are congruent to the same value $ x \pmod{7} $, and in particular, assume
\[
x = 0 \quad (\text{i.e., } N \equiv 0 \pmod{7}).
\]
\end{tcolorbox}

From this point onward, the reasoning proceeds by enumerating digit values under the unsupported assumption that all four congruences must equal the same residue class, which is not implied by the original system.

The solution eventually concludes:
\[
Q = 9,\quad R = 888,\quad Q+R = 897.
\]

\subsubsection*{Final Answer}
\[
\boxed{897}
\]

\end{tcolorbox}

\subsection{Prompt Template}
\label{app:prompt_template}

\begin{tcolorbox}[
    colback=blue!3,
    colframe=blue!55!black,
    width=\textwidth,
    enlarge left by=-\parindent,
    enlarge right by=-\parindent,
    arc=4pt,
    boxrule=0.8pt,
    title=\textbf{Prompt Template for Diversity Scoring},
    fonttitle=\bfseries,
]

\textbf{System:}
You are a strict but fair grader. You will be given one math problem and
\textbf{32 independent rollouts (solutions)}.
Your task is to score diversity across rollouts.

\vskip 0.5em
Evaluate the following three dimensions on a \textbf{0--10 scale}:

\begin{itemize}[leftmargin=1.5em]
    \item \textbf{logic\_diversity}: differences in reasoning flow or step order
    \item \textbf{formula\_diversity}: differences in formulas, identities, or techniques used
    \item \textbf{semantic\_diversity}: differences in overall semantic approach or framing
\end{itemize}

\vskip 0.5em
Then output \textbf{overall diversity} as the average of the three scores
(\textbf{0--10}, rounded to \textbf{one decimal}).

\vskip 0.5em
Be objective and do not prefer any model.
\textbf{Output JSON only.}

\end{tcolorbox}

\section{Theoretical Analysis of DSDR}
\label{app:dsdr_theory}

This appendix part provides formal justification for the key design choices in
DSDR (Sec.~\ref{sec:dsdr}): (i) the \emph{correct-only} global diversity reward
(Eq.~\eqref{eq:augmented_reward}), (ii) the global-to-local coupling weights
(Eq.~\eqref{eq:diversity_softmax}), and (iii) the \emph{positive-only}, length-invariant
token-level entropy regularizer (Eq.~\eqref{eq:local_objective}).

\subsection{Setup, Notation, and Standing Assumptions}
\label{app:setup}

We consider RLVR with verifiable reward on prompts $q\sim\mathcal D$. A policy $\pi(\cdot\mid q)$ induces an output random variable $O$ (a sequence of tokens)
and a verifier reward $R(q,O)\in\{0,1\}$.
For group-based training (GRPO-style), for each prompt $q$ we sample a group
$\{o_i\}_{i=1}^G$ and obtain rewards $\{r_i\}_{i=1}^G$ with $r_i=R(q,o_i)$.

\paragraph{Augmented Reward (Correct-Only Global Diversity Reward)}
For each rollout $o_i$, we compute a diversity score $d(o_i)$ and use a clipped score
\begin{equation}
\bar d_i=\operatorname{clip}\!\big(d(o_i);0,\sigma_d\big)\in[0,\sigma_d].
\label{eq:app_dbar}
\end{equation}
The augmented reward used for group advantage computation is
\begin{equation}
\tilde r_i = r_i + \lambda_d\,\bar d_i\,\mathbbm 1(r_i=1),
\qquad \lambda_d\ge 0.
\label{eq:app_rtilde}
\end{equation}

\paragraph{Group Normalization}
Define the empirical mean and standard deviation of $\{\tilde r_i\}_{i=1}^G$:
\begin{equation}
\mu_{\tilde r}=\frac{1}{G}\sum_{i=1}^G \tilde r_i,
\qquad
\sigma_{\tilde r}=\sqrt{\frac{1}{G}\sum_{i=1}^G (\tilde r_i-\mu_{\tilde r})^2}.
\label{eq:app_mean_std}
\end{equation}
In practice we use a stabilized standard deviation $\sigma_{\tilde r,\varepsilon}>0$, e.g.,
$\sigma_{\tilde r,\varepsilon}=\sigma_{\tilde r}+\varepsilon$ or
$\sigma_{\tilde r,\varepsilon}=\sqrt{\sigma_{\tilde r}^2+\varepsilon^2}$
with $\varepsilon>0$. The normalized advantages are
\begin{equation}
\tilde A_i=\frac{\tilde r_i-\mu_{\tilde r}}{\sigma_{\tilde r,\varepsilon}}.
\label{eq:app_adv}
\end{equation}
All results below remain valid under either stabilization choice as long as $\varepsilon>0$.

\paragraph{Entropy Conventions.}
The action space at each decoding step is the vocabulary $\mathcal V$, assumed finite.
All logarithms are natural logs. Standard entropy identities and bounds follow
classical information theory~\citep{cover1999elements}.

\paragraph{Correctness Gap Assumption}
For Proposition~\ref{prop:correctness_preservation}, we assume a strict gap
\begin{equation}
\Delta := J_R^\star - \overline J_R > 0,
\label{eq:app_gap}
\end{equation}
where $J_R^\star$ is optimal correctness and $\overline J_R$ is the best correctness strictly below optimal.
This separation commonly holds when the set of achievable correctness values is discrete under a restricted policy class.

\subsection{Inter-/Intra-mode Decomposition (Formal Complementarity)}
\label{app:inter_intra}

DSDR introduces \emph{two} diversity mechanisms operating at different scales:
a \emph{global} (sequence-level) preference among correct trajectories
(Eq.~\eqref{eq:augmented_reward}) and a \emph{local} (token-level) positive-only entropy
regularizer (Eq.~\eqref{eq:local_objective}). This subsection provides an
information-theoretic lens clarifying why these two terms are complementary rather
than redundant: global shaping targets \emph{inter-mode} coverage (deep exploration across
distinct reasoning modes), while local entropy targets \emph{intra-mode} thickening
(maintaining non-collapsed variability within a mode).

We view generation as a mixture over latent ``reasoning modes'' $Z$ (e.g., distinct solution strategies).
For each prompt $q$, suppose
\begin{equation}
Z \sim p(z\mid q),\qquad O\sim p(o\mid z,q).
\label{eq:app_latent_gen}
\end{equation}

\begin{lemma}[Inter-/intra-mode entropy decomposition]
\label{lem:inter_intra}
For any fixed prompt $q$,
\begin{equation}
H(O\mid q)=I(O;Z\mid q)+H(O\mid Z,q),
\label{eq:app_inter_intra}
\end{equation}
where $I(O;Z\mid q)$ is conditional mutual information.
\end{lemma}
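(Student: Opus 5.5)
The identity is the standard chain rule for mutual information, so the plan is to expand both sides as expectations of log-probabilities under the joint law $p(z,o\mid q)=p(z\mid q)\,p(o\mid z,q)$ given in Eq.~\eqref{eq:app_latent_gen}. The chain rule is the one from~\citet{cover1999elements}. Throughout, $q$ is fixed, so every quantity is an ordinary entropy or mutual information of the pair $(O,Z)$ under this joint distribution. The marginal $p(o\mid q)=\sum_z p(z\mid q)p(o\mid z,q)$ is the output distribution of the policy.

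First, I would write down the definitions:
\[
H(O\mid q)=-\mathbb{E}\bigl[\log p(O\mid q)\bigr],\qquad
H(O\mid Z,q)=-\mathbb{E}\bigl[\log p(O\mid Z,q)\bigr],
\]
\[
I(O;Z\mid q)=\mathbb{E}\Bigl[\log \tfrac{p(O,Z\mid q)}{p(O\mid q)\,p(Z\mid q)}\Bigr].
\]
All expectations are taken over $(Z,O)\sim p(\cdot,\cdot\mid q)$.

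Next, I would use $p(O,Z\mid q)/p(Z\mid q)=p(O\mid Z,q)$ to rewrite the log-ratio inside the mutual information as $\log p(O\mid Z,q)-\log p(O\mid q)$. Taking expectations and using linearity gives
\[
I(O;Z\mid q)=H(O\mid q)-H(O\mid Z,q).
\]
Rearranging this yields Eq.~\eqref{eq:app_inter_intra}.

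The only point needing care is well-definedness, i.e.\ that the expectations are finite so that linearity applies. Since the vocabulary $\mathcal V$ is finite, if rollouts have bounded length then $O$ takes finitely many values and every entropy is finite. Otherwise I would assume $H(O\mid q)<\infty$. This assumption gives $H(O\mid Z,q)\le H(O\mid q)<\infty$, because conditioning reduces entropy, and it also makes $I(O;Z\mid q)$ finite. Terms with $p=0$ are handled by the convention $0\log 0=0$. If the latent $Z$ is continuous, the same computation goes through with sums replaced by integrals against $p(z\mid q)$, since only $O$'s entropy needs to be discrete. I expect no real obstacle here; the only subtle part is this finiteness bookkeeping.
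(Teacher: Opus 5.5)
Your proposal is correct and follows essentially the same route as the paper, which takes $I(O;Z\mid q)=H(O\mid q)-H(O\mid Z,q)$ as the definition (citing Cover and Thomas) and rearranges. You go slightly further by deriving that identity from the log-ratio definition of mutual information and by checking finiteness, which the paper leaves implicit.
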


\begin{proof}
This is a standard identity~\citep{cover1999elements}. By definition,
$ I(O;Z\mid q)=H(O\mid q)-H(O\mid Z,q)$.
Rearranging yields~\eqref{eq:app_inter_intra}.
\end{proof}

\paragraph{Interpretation of DSDR.}
$I(O;Z\mid q)$ captures \emph{inter-mode} diversity (deep exploration across reasoning modes),
while $H(O\mid Z,q)$ captures \emph{intra-mode} diversity (variation within a mode). DSDR’s global shaping
primarily promotes inter-mode coverage, while local positive-only entropy discourages intra-mode collapse.

\subsection{Correctness Preservation Under Bounded Local Regularization}
\label{app:correctness}

The local objective in DSDR (Eq.~\eqref{eq:local_objective}) increases token-level entropy
\emph{only along correct trajectories} and is length-normalized to avoid incentivizing
longer outputs. A natural concern is whether adding this regularizer could sacrifice
verifiable correctness. This subsection shows that, at the population level,
a \emph{sufficiently small} local regularization weight cannot make a lower-correctness
policy optimal; it can only break ties among correctness-optimal policies.

\paragraph{Entropy Upper Bound for Length-invariant Token Entropy.}
For any categorical distribution over $|\mathcal V|$ tokens, entropy is maximized by the uniform distribution,
so $0\le \mathcal H(\pi(\cdot\mid s))\le \log|\mathcal V|$~\citep{cover1999elements}.
Define the length-invariant per-token conditional entropy functional
\begin{equation}
J_H(\pi)
=
\mathbb E\!\left[\frac{1}{T}\sum_{t=1}^T \mathcal H\big(\pi(\cdot\mid q,o_{<t})\big)\right].
\label{eq:app_JH}
\end{equation}
Then
\begin{equation}
0\le J_H(\pi)\le H_{\max}:=\log|\mathcal V|.
\label{eq:app_JH_bound}
\end{equation}
The same upper bound also holds for the positive-only variant
$J_H^+(\pi)=\mathbb E[\mathbbm 1(R(q,O)=1)\cdot \frac{1}{T}\sum_t \mathcal H(\cdot)]$
since $\mathbbm 1(\cdot)\le 1$.

\begin{proposition}[Correctness preservation under bounded $\lambda_\ell$]
\label{prop:correctness_preservation}
Define the population correctness objective
\begin{equation}
J_R(\pi)=\mathbb E_{q\sim\mathcal D,\;o\sim \pi(\cdot\mid q)}[R(q,o)]\in[0,1],
\label{eq:app_JR}
\end{equation}
and let $J_R^\star=\sup_\pi J_R(\pi)$. Define the best correctness among strictly suboptimal policies
\begin{equation}
\overline J_R=\sup_{\pi:\;J_R(\pi)<J_R^\star} J_R(\pi),
\qquad
\Delta:=J_R^\star-\overline J_R>0.
\label{eq:app_JR_bar}
\end{equation}
Let $J_H(\pi)$ be any regularizer satisfying $0\le J_H(\pi)\le H_{\max}$ (e.g.,~\eqref{eq:app_JH}).
Consider the regularized objective
\begin{equation}
J_{\mathrm{reg}}(\pi)=J_R(\pi)+\lambda_\ell J_H(\pi),
\qquad \lambda_\ell\ge 0.
\label{eq:app_Jreg}
\end{equation}
If $\lambda_\ell<\Delta/H_{\max}$, then every maximizer $\pi^\star_{\mathrm{reg}}\in\arg\max_\pi J_{\mathrm{reg}}(\pi)$ is correctness-optimal:
\begin{equation}
J_R(\pi^\star_{\mathrm{reg}})=J_R^\star.
\label{eq:app_correct_opt}
\end{equation}
\end{proposition}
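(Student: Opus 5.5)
The argument is by contradiction, comparing any correctness-suboptimal policy against near-optimal ones. Suppose some maximizer $\pi^\star_{\mathrm{reg}}$ of $J_{\mathrm{reg}}$ has $J_R(\pi^\star_{\mathrm{reg}})<J_R^\star$. By the definition of $\overline J_R$ in~\eqref{eq:app_JR_bar}, we then have $J_R(\pi^\star_{\mathrm{reg}})\le \overline J_R = J_R^\star-\Delta$. Combining this with the upper bound $J_H\le H_{\max}$ from~\eqref{eq:app_JH_bound} gives
\[
J_{\mathrm{reg}}(\pi^\star_{\mathrm{reg}})\le J_R^\star-\Delta+\lambda_\ell H_{\max}.
\]
Under the hypothesis $\lambda_\ell<\Delta/H_{\max}$, the right-hand side is strictly below $J_R^\star$.

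For the comparison policy, use the lower bound $J_H\ge 0$ together with $\lambda_\ell\ge 0$. Any policy $\pi$ then satisfies $J_{\mathrm{reg}}(\pi)\ge J_R(\pi)$. Set $\eta:=\Delta-\lambda_\ell H_{\max}>0$. Since $J_R^\star=\sup_\pi J_R(\pi)$, there is a policy $\pi'$ with $J_R(\pi')>J_R^\star-\eta$. Hence
\[
J_{\mathrm{reg}}(\pi')\ge J_R(\pi')>J_R^\star-\eta=J_R^\star-\Delta+\lambda_\ell H_{\max}\ge J_{\mathrm{reg}}(\pi^\star_{\mathrm{reg}}).
\]
This contradicts the maximality of $\pi^\star_{\mathrm{reg}}$, so every maximizer satisfies $J_R(\pi^\star_{\mathrm{reg}})=J_R^\star$.

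The main subtlety is that $J_R^\star$ is only a supremum and need not be attained, which is why the argument uses an $\eta$-approximate maximizer $\pi'$ rather than an exact correctness-optimal policy. The gap $\Delta>0$ is exactly what lets such an approximate maximizer beat any suboptimal policy. In fact $\Delta>0$ forces the supremum to be attained: near-optimal values cannot accumulate strictly below $J_R^\star$, so any $\pi'$ with $J_R(\pi')>J_R^\star-\Delta$ must itself be optimal. No other obstacles are expected, and the same argument applies verbatim to the positive-only variant $J_H^+$, since it obeys the same bounds $0\le J_H^+\le H_{\max}$.
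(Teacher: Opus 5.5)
Your proof is correct and follows the paper's argument. Both bound any correctness-suboptimal policy by $\overline J_R+\lambda_\ell H_{\max}<J_R^\star$ using $J_H\le H_{\max}$ and the gap, and bound an optimal or near-optimal comparison policy from below by its correctness using $J_H\ge 0$. The only difference is that the paper takes an exact $\pi^\star$ with $J_R(\pi^\star)=J_R^\star$ instead of your $\eta$-approximate $\pi'$; as you observe, this is valid because $\Delta>0$ forces the supremum to be attained, a point the paper uses without stating.
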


\begin{proof}
Take any correctness-suboptimal policy $\pi$ with $J_R(\pi)\le \overline J_R$.
Using $J_H(\pi)\le H_{\max}$,
\begin{equation}
J_{\mathrm{reg}}(\pi)=J_R(\pi)+\lambda_\ell J_H(\pi)\le \overline J_R+\lambda_\ell H_{\max}.
\label{eq:app_subopt_upper}
\end{equation}
Take any correctness-optimal policy $\pi^\star$ with $J_R(\pi^\star)=J_R^\star$.
Since $J_H(\pi^\star)\ge 0$,
\begin{equation}
J_{\mathrm{reg}}(\pi^\star)=J_R^\star+\lambda_\ell J_H(\pi^\star)\ge J_R^\star.
\label{eq:app_opt_lower}
\end{equation}
If $\lambda_\ell<\Delta/H_{\max}$, then
\begin{equation}
\overline J_R+\lambda_\ell H_{\max} < \overline J_R+\Delta = J_R^\star \le J_{\mathrm{reg}}(\pi^\star).
\label{eq:app_gap_chain}
\end{equation}
Thus every correctness-suboptimal $\pi$ has strictly smaller $J_{\mathrm{reg}}(\pi)$ than some correctness-optimal $\pi^\star$,
so no correctness-suboptimal policy can maximize $J_{\mathrm{reg}}$. Hence any maximizer satisfies~\eqref{eq:app_correct_opt}.
\end{proof}

\subsection{GRPO Signal Preservation via Correct-Only Global Diversity Reward}
\label{app:signal}

DSDR’s augmented reward (Eq.~\eqref{eq:augmented_reward}) is motivated by a concrete optimization issue in
group-relative methods: when verifier rewards become nearly constant within a group (e.g., many correct rollouts),
the within-group variance shrinks and the group-normalized advantages used in GRPO can degenerate.
This subsection formalizes two points aligned with Sec.~\ref{sec:dsdr}:
(i) verifier-only rewards yield informative groups only when the sampled group mixes successes and failures, and
(ii) DSDR’s correct-only diversity bonus creates controlled dispersion among correct solutions, ensuring non-degenerate
group-normalized advantages whenever diversity scores differ.

\begin{lemma}[Probability of a mixed verifier-reward group]
\label{lem:pmix}
Fix prompt $q$. Let $r_1,\dots,r_G$ be i.i.d.\ $\mathrm{Bernoulli}(p)$, where
\begin{equation}
p=\Pr(R(q,O)=1).
\label{eq:app_p}
\end{equation}
Let $\mathrm{Mix}=\{\exists i,j:\; r_i\neq r_j\}$ denote the mixed-group event. Then
\begin{equation}
\Pr(\mathrm{Mix})=P_{\mathrm{mix}}(p;G)=1-p^G-(1-p)^G.
\label{eq:app_pmix}
\end{equation}
\end{lemma}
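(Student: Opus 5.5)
The statement immediately above is Lemma~\ref{lem:pmix}, and the plan is a direct complement argument. The verifier rewards $r_1,\dots,r_G$ are binary, so a group is \emph{not} mixed exactly when all rewards coincide. That is,
\[
\mathrm{Mix}^c=\{r_1=\cdots=r_G=1\}\,\cup\,\{r_1=\cdots=r_G=0\}.
\]
I will first establish this set identity. If some pair satisfies $r_i\neq r_j$, the group is not constant. Conversely, if no pair differs, every $r_i$ equals $r_1\in\{0,1\}$.

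Next I will observe that the two constant events are disjoint whenever $G\ge 1$, since $r_1$ cannot be both $0$ and $1$. Their probabilities therefore add. By independence and the identical $\mathrm{Bernoulli}(p)$ marginals with $p$ as in~\eqref{eq:app_p}, I will factor each event into a product of single-rollout probabilities:
\[
\Pr(r_1=\cdots=r_G=1)=\prod_{i=1}^G\Pr(r_i=1)=p^G,
\qquad
\Pr(r_1=\cdots=r_G=0)=(1-p)^G.
\]
Taking complements then gives $\Pr(\mathrm{Mix})=1-p^G-(1-p)^G$, which is~\eqref{eq:app_pmix}.

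I do not expect a genuine obstacle. The only points needing care are the disjointness of the two constant events and the i.i.d.\ assumption. The latter holds because the rollouts are drawn independently from the same $\pi(\cdot\mid q)$ with $q$ fixed, and $R$ is a deterministic function of $(q,o)$. This is what lets the joint probabilities factor.

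I will also note the degenerate cases. For $G=1$ the formula gives $0$, consistent with the fact that a single rollout cannot form a mixed group. For $p\in\{0,1\}$ it also gives $0$, which is the regime in which verifier-only GRPO advantages vanish and which motivates the subsequent signal-preservation result.
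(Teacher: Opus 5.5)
Your proposal is correct and follows essentially the same route as the paper's proof. Both write $\mathrm{Mix}^c$ as the disjoint union of the all-ones and all-zeros events, with probabilities $p^G$ and $(1-p)^G$ by independence, and then take complements.
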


\begin{proof}
The complement of $\mathrm{Mix}$ is the disjoint union of ``all ones'' and ``all zeros'',
with probabilities $p^G$ and $(1-p)^G$, respectively. Therefore
$\Pr(\mathrm{Mix})=1-p^G-(1-p)^G$.
\end{proof}

\begin{proposition}[Non-vanishing GRPO signal under correct-only diversity bonus]
\label{prop:nonvanishing}
Let $\tilde r_i$ be defined in~\eqref{eq:app_rtilde} and $\tilde A_i$ be the stabilized group-normalized advantages in~\eqref{eq:app_adv}.
If there exist $i\neq j$ such that $\tilde r_i\neq \tilde r_j$, then
\begin{equation}
\mathrm{Var}(\tilde r_1,\ldots,\tilde r_G)>0
\quad\text{and}\quad
\{\tilde A_i\}_{i=1}^G \text{ are not all zero.}
\label{eq:app_nonvanishing}
\end{equation}
In particular, in a solve-all group ($r_i=1$ for all $i$),
\begin{equation}
\tilde r_i = 1+\lambda_d \bar d_i.
\label{eq:app_solveall_rtilde}
\end{equation}
If $\lambda_d>0$ and $\{\bar d_i\}$ are not all identical, then group-relative advantages remain non-degenerate even though verifier rewards are constant.
\end{proposition}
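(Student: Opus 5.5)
The plan is to prove the first claim directly and obtain the solve-all claim as a special case.

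Write $\mu_{\tilde r}$ and $\sigma_{\tilde r}$ as in~\eqref{eq:app_mean_std}. The empirical variance $\sigma_{\tilde r}^2=\frac1G\sum_i(\tilde r_i-\mu_{\tilde r})^2$ is a sum of nonnegative terms. It vanishes iff $\tilde r_i=\mu_{\tilde r}$ for every $i$, that is, iff all $\tilde r_i$ are equal. So if some $\tilde r_i\neq\tilde r_j$, the variance is strictly positive.

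For the advantages, the same hypothesis implies that not every $\tilde r_k$ can equal $\mu_{\tilde r}$. If they all did, then $\tilde r_i=\tilde r_j$, a contradiction. Pick $k$ with $\tilde r_k\neq\mu_{\tilde r}$. The denominator $\sigma_{\tilde r,\varepsilon}$ is finite and strictly positive under either stabilization, since $\varepsilon>0$. Hence $\tilde A_k=(\tilde r_k-\mu_{\tilde r})/\sigma_{\tilde r,\varepsilon}\neq 0$, so the advantages are not all zero.

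For the solve-all case, substitute $r_i=1$ into~\eqref{eq:app_rtilde}. The indicator equals $1$, so $\tilde r_i=1+\lambda_d\bar d_i$, which is~\eqref{eq:app_solveall_rtilde}. If $\lambda_d>0$ and $\bar d_i\neq\bar d_j$ for some pair, then $\tilde r_i-\tilde r_j=\lambda_d(\bar d_i-\bar d_j)\neq0$, and the first part applies. Non-degeneracy then holds even though $r_1=\dots=r_G$; with plain verifier rewards, all advantages would be exactly zero.

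There is no real obstacle here. The only point needing care is the role of the stabilization $\varepsilon$. It ensures the advantages are well defined, and it is harmless because a finite positive denominator cannot turn a nonzero numerator into zero.
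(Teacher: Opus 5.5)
Your proof is correct and follows essentially the same route as the paper: some $\tilde r_k$ must differ from $\mu_{\tilde r}$, which gives positive variance and, because $\sigma_{\tilde r,\varepsilon}>0$, a nonzero $\tilde A_k$. As in the paper, you obtain the solve-all case by substituting $r_i=1$ into~\eqref{eq:app_rtilde} and applying the first part.
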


\begin{proof}
If $\tilde r_i$ are not all identical, then letting $\mu_{\tilde r}$ be the group mean in~\eqref{eq:app_mean_std},
there exists an index $i$ such that $\tilde r_i\neq \mu_{\tilde r}$, hence
\begin{equation}
\mathrm{Var}(\tilde r_1,\ldots,\tilde r_G)
=
\frac{1}{G}\sum_{k=1}^G(\tilde r_k-\mu_{\tilde r})^2
>0.
\label{eq:app_var_pos}
\end{equation}
Because $\sigma_{\tilde r,\varepsilon}>0$ by construction, for that index $i$ we have
$\tilde A_i=(\tilde r_i-\mu_{\tilde r})/\sigma_{\tilde r,\varepsilon}\neq 0$,
so the advantages are not all zero. The solve-all case follows by substituting~\eqref{eq:app_solveall_rtilde}:
if $\lambda_d>0$ and $\bar d_i$ are not all equal, then $\tilde r_i$ are not all equal and the above applies.
\end{proof}

\paragraph{Remark (Connection to PPO/GRPO).}
In PPO-style clipped surrogate objectives~\citep{schulman2017proximal} and GRPO-style group-relative updates (e.g.,~\citep{guo2025deepseek}),
the reward-driven policy-improvement term is scaled by normalized advantages. If all advantages were zero, the reward-driven gradient would vanish
(leaving only KL regularization). Proposition~\ref{prop:nonvanishing} guarantees non-degenerate preference signal whenever diversity induces non-constant $\tilde r_i$.

\subsection{Optimality of Diversity-Softmax Global-to-Local Coupling }
\label{app:allocation}

DSDR couples global and local regularization by allocating the local entropy budget across correct trajectories
according to the diversity-softmax weights (Eq.~\eqref{eq:diversity_softmax}). This subsection shows that the
softmax allocation is not an ad-hoc heuristic: it is the unique solution of an entropy-regularized resource
allocation problem. This provides a principled interpretation of the temperature $\tau$ as a
concentration--exploration control knob.

\begin{proposition}[Softmax allocation optimality]
\label{prop:softmax_opt}
Assume $\mathcal P=\{i:\;r_i=1\}\neq\emptyset$. Consider allocating a local-entropy budget across correct rollouts with a distribution
$w$ over $\mathcal P$:
\begin{equation}
w_i\ge 0,\qquad \sum_{i\in\mathcal P} w_i = 1.
\label{eq:app_simplex}
\end{equation}
For $\tau>0$, the diversity-softmax weights
\begin{equation}
w_i^\star=\frac{\exp(\tau \bar d_i)}{\sum_{j\in\mathcal P}\exp(\tau \bar d_j)}
\label{eq:app_softmax}
\end{equation}
are the unique maximizer of the entropy-regularized allocation problem
\begin{equation}
\max_{w\in \Delta(\mathcal P)}
\Big(
\tau \sum_{i\in\mathcal P} w_i\,\bar d_i
+
\mathcal H(w)
\Big),
\qquad
\mathcal H(w)=-\sum_{i\in\mathcal P} w_i\log w_i,
\label{eq:app_alloc_obj}
\end{equation}
where $\Delta(\mathcal P)$ is the simplex in~\eqref{eq:app_simplex}.
\end{proposition}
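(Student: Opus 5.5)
The plan is to recognize \eqref{eq:app_alloc_obj} as a strictly concave maximization over the finite simplex $\Delta(\mathcal P)$ and solve it in closed form. I see two routes: a Lagrangian/KKT computation, or a Gibbs-variational identity that yields optimality and uniqueness at once. I will take the second route, since it sidesteps boundary cases where $w_i=0$.

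Write $F(w)=\tau\sum_{i\in\mathcal P} w_i\bar d_i+\mathcal H(w)$ and $Z=\sum_{j\in\mathcal P}\exp(\tau\bar d_j)$, so that $w_i^\star=\exp(\tau\bar d_i)/Z>0$ for all $i\in\mathcal P$. From the definition of $w^\star$ we have $\tau\bar d_i=\log w_i^\star+\log Z$. Substituting this into $F$ for an arbitrary $w\in\Delta(\mathcal P)$ and using $\sum_i w_i=1$ (with the convention $0\log 0=0$) gives
\begin{equation*}
F(w)=\log Z+\sum_{i\in\mathcal P} w_i\log w_i^\star-\sum_{i\in\mathcal P} w_i\log w_i=\log Z-D_{\mathrm{KL}}(w\,\|\,w^\star).
\end{equation*}
This is valid because $w^\star$ has full support on $\mathcal P$, so the KL divergence is finite for every $w$.

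By Gibbs' inequality~\citep{cover1999elements}, $D_{\mathrm{KL}}(w\|w^\star)\ge 0$, with equality if and only if $w=w^\star$. It follows that $F(w)\le\log Z=F(w^\star)$, and equality holds only at $w^\star$. This establishes both that $w^\star$ is a maximizer and that it is the unique one. As a sanity check, the KKT conditions give the same answer: stationarity reads $\tau\bar d_i-\log w_i-1-\nu=0$, so $w_i\propto e^{\tau\bar d_i}$. Strict concavity of $\mathcal H$ plus the linear term would give uniqueness independently.

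The main obstacle is justifying the identity and the equality case on the boundary of the simplex, where some $w_i=0$ and $\log w_i$ is undefined. The $0\log 0=0$ convention together with the full support of $w^\star$ handles this. In particular, any $w$ with a zero coordinate has $w\neq w^\star$ and therefore strictly smaller value. Finally, I note that the proposition places no sign restriction on $\tau$ beyond $\tau>0$, so nothing further is needed.
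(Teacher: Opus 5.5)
Your proof is correct, and it takes a different route from the paper's. The paper uses convex optimization. It notes that the objective (a linear term plus entropy) is strictly concave, so there is at most one maximizer. It then writes the Lagrangian for $\sum_{i\in\mathcal P}w_i=1$ and solves the stationarity condition $\tau\bar d_i-(1+\log w_i)+\alpha=0$ \emph{at an interior optimum} to get $w_i\propto\exp(\tau\bar d_i)$. After normalizing, it invokes strict concavity again to conclude that this stationary point is the unique global maximizer. You instead certify the candidate directly through the exact identity $F(w)=\log Z-D_{\mathrm{KL}}(w\,\|\,w^\star)$ plus Gibbs' inequality.

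Your route gives several things:
\begin{enumerate}
\item Optimality and uniqueness come out in a single step.
\item Boundary points of the simplex are handled explicitly, via $0\log 0=0$ and the full support of $w^\star$. The paper handles them only implicitly, by relying on the fact that an interior stationary point of a concave function on the simplex is a global maximizer; it states stationarity for an interior optimum without first showing the optimum is interior.
\item You get the optimal value $\log Z$ and the exact suboptimality gap $F(w^\star)-F(w)=D_{\mathrm{KL}}(w\,\|\,w^\star)$.
\item The sign of $\tau$ is never used, so the conclusion holds for every real $\tau$. Your closing remark about $\tau>0$ undersells this.
\end{enumerate}
Your identity is in effect the paper's subsequent KL-to-uniform reformulation pushed one step further: the linear term is absorbed into the reference distribution, so the maximization becomes trivial.

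What the paper's KKT route gives in return is that it is constructive. It derives the softmax form from first-order conditions, rather than requiring the answer to be guessed in advance and then verified.
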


\begin{proof}
This is the classical maximum-entropy / Gibbs distribution derivation~\citep{jaynes1957information}; we provide a convex-optimization proof via KKT conditions~\citep{boyd2004convex}.

\textbf{(Uniqueness)} The objective in~\eqref{eq:app_alloc_obj} is linear in $w$ plus entropy $\mathcal H(w)$, which is strictly concave over the simplex interior~\citep{cover1999elements,boyd2004convex}. Hence the objective is strictly concave and the maximizer is unique.

\textbf{(Stationarity)} Form the Lagrangian with multiplier $\alpha$ for $\sum_{i\in\mathcal P} w_i=1$:
\begin{equation}
\mathcal L(w,\alpha)
=
\tau\sum_{i\in\mathcal P} w_i \bar d_i
-\sum_{i\in\mathcal P} w_i\log w_i
+\alpha\Big(\sum_{i\in\mathcal P}w_i-1\Big).
\label{eq:app_lagr}
\end{equation}
At an interior optimum, stationarity gives
\begin{equation}
\frac{\partial \mathcal L}{\partial w_i}
=
\tau \bar d_i-(1+\log w_i)+\alpha=0
\quad\Rightarrow\quad
\log w_i=\tau \bar d_i+\alpha-1.
\label{eq:app_stationary}
\end{equation}
Exponentiating yields $w_i=C\exp(\tau\bar d_i)$ with $C=\exp(\alpha-1)$. Enforcing the simplex constraint implies
$C=\big(\sum_{j\in\mathcal P}\exp(\tau\bar d_j)\big)^{-1}$, yielding~\eqref{eq:app_softmax}.
By strict concavity, this stationary point is the unique global maximizer.
\end{proof}

\paragraph{Equivalent KL-Regularized Form.}
Let $u$ be the uniform distribution on $\mathcal P$. Using
$\mathcal H(w)=\log|\mathcal P| - D_{\mathrm{KL}}(w\|u)$, the problem~\eqref{eq:app_alloc_obj}
is equivalent (up to an additive constant) to
\[
\max_{w\in\Delta(\mathcal P)}
\left(
\tau\,\mathbb E_{i\sim w}[\bar d_i] - D_{\mathrm{KL}}(w\|u)
\right),
\]
highlighting the exploration--concentration tradeoff controlled by $\tau$.

\subsection{Proof of Theorem~\ref{thm:dsdr_tilted_pg}}
\label{app:proof_tilted_pg}

\begin{proof}
Fix $q$ and define
\[
f(q,o) \;=\; \exp\!\big(\tau \bar d(q,o)\big)\,\mathbbm{1}(R(q,o)=1)\;\;\ge 0.
\]
By definition,
\[
J_\tau(\theta;q)=\frac{1}{\tau}\log Z_\tau(\theta;q),
\qquad
Z_\tau(\theta;q)=\mathbb{E}_{o\sim \pi_\theta(\cdot\mid q)}[f(q,o)].
\]
Since $Z_\tau(\theta;q)>0$ by assumption, we can differentiate:
\begin{equation}
\label{eq:proof_step1}
\nabla_\theta J_\tau(\theta;q)
=
\frac{1}{\tau}\cdot \frac{1}{Z_\tau(\theta;q)}\;\nabla_\theta Z_\tau(\theta;q).
\end{equation}
Using the score-function (log-derivative) identity~\citep{williams1992simple,sutton1999policy},
\begin{equation}
\label{eq:proof_score}
\nabla_\theta \mathbb{E}_{o\sim \pi_\theta(\cdot\mid q)}[f(q,o)]
=
\mathbb{E}_{o\sim \pi_\theta(\cdot\mid q)}\!\left[
f(q,o)\,\nabla_\theta \log \pi_\theta(o\mid q)
\right],
\end{equation}
we obtain
\begin{equation}
\label{eq:proof_step2}
\nabla_\theta Z_\tau(\theta;q)
=
\mathbb{E}_{o\sim \pi_\theta(\cdot\mid q)}\!\left[
f(q,o)\,\nabla_\theta \log \pi_\theta(o\mid q)
\right].
\end{equation}
Substituting~\eqref{eq:proof_step2} into~\eqref{eq:proof_step1} yields
\begin{equation}
\label{eq:proof_step3}
\nabla_\theta J_\tau(\theta;q)
=
\mathbb{E}_{o\sim \pi_\theta(\cdot\mid q)}\!\left[
\frac{1}{\tau}\frac{f(q,o)}{Z_\tau(\theta;q)}\;\nabla_\theta \log \pi_\theta(o\mid q)
\right].
\end{equation}
Next, note that
\[
\mathbb{E}_{o\sim \pi_\theta(\cdot\mid q)}\!\left[\nabla_\theta \log \pi_\theta(o\mid q)\right]
=
\nabla_\theta \int \pi_\theta(o\mid q)\,do
=
\nabla_\theta 1
=
0,
\]
so subtracting the constant baseline $1/\tau$ does not change the expectation~\citep{sutton2000policy}:
\begin{equation}
\label{eq:proof_step4}
\nabla_\theta J_\tau(\theta;q)
=
\mathbb{E}_{o\sim \pi_\theta(\cdot\mid q)}\!\left[
\frac{1}{\tau}\Big(\frac{f(q,o)}{Z_\tau(\theta;q)}-1\Big)\;\nabla_\theta \log \pi_\theta(o\mid q)
\right].
\end{equation}
Expanding $f(q,o)$ and $Z_\tau(\theta;q)$ gives exactly~\eqref{eq:tilted_pg}--\eqref{eq:tilted_adv}.

Finally, for the Monte Carlo form, let $\{o_i\}_{i=1}^G$ be i.i.d.\ samples from $\pi_\theta(\cdot\mid q)$ and define
$f_i=f(q,o_i)=\exp(\tau \bar d_i)\mathbbm{1}(r_i=1)$.
A plug-in estimator for $Z_\tau(\theta;q)$ is $\hat Z=(1/G)\sum_{j=1}^G f_j$, and for the numerator
$\widehat{\nabla Z}=(1/G)\sum_{i=1}^G f_i \nabla_\theta \log \pi_\theta(o_i\mid q)$.
Thus the ratio form in~\eqref{eq:proof_step3} yields
\[
\widehat{\nabla_\theta J_\tau}(\theta;q)
=
\frac{1}{\tau}\cdot \frac{\widehat{\nabla Z}}{\hat Z}
=
\frac{1}{\tau}
\sum_{i=1}^G
\frac{f_i}{\sum_{j=1}^G f_j}\;
\nabla_\theta \log \pi_\theta(o_i\mid q),
\]
which corresponds to weights $\hat w_i=f_i/\sum_{j=1}^G f_j$, proving~\eqref{eq:tilted_softmax_mc}.
Because $f_i=0$ whenever $r_i=0$, restricting to the correct set $\mathcal P=\{i:r_i=1\}$ gives
$\hat w_i \propto \exp(\tau \bar d_i)$ on $\mathcal P$, matching Eq.~\eqref{eq:diversity_softmax}.
\end{proof}

\end{document}